\documentclass[11pt]{article}

\usepackage[a4paper,margin=1in]{geometry}
\usepackage{amsmath,amssymb,amsthm,mathtools,bm}
\usepackage{graphicx}
\usepackage{tabularx}
\usepackage{booktabs}
\usepackage[font=small,labelfont=bf]{caption}
\usepackage{subcaption}
\usepackage{float}
\usepackage[section]{placeins}
\usepackage[hidelinks]{hyperref}
\usepackage{cleveref}
\usepackage{enumitem}
\usepackage{microtype}
\usepackage{tikz}
\usetikzlibrary{arrows.meta,positioning,fit,calc}
\usepackage[ruled,vlined]{algorithm2e}
\usepackage[svgnames]{xcolor}
\usepackage{multirow}
\usepackage{cite}

\newtheorem{theorem}{Theorem}[section]
\newtheorem{proposition}[theorem]{Proposition}
\newtheorem{lemma}[theorem]{Lemma}

\theoremstyle{definition}
\newtheorem{definition}[theorem]{Definition}
\theoremstyle{remark}

\newcommand{\D}{\mathrm{D}}

\newcommand{\diag}{\mathrm{diag}}

\newcommand{\Scale}[1]{\mathcal{S}_{#1}}

\title{\bf CSympNet-ID: conformal-symplectic map learning for linearly damped Hamiltonian systems}
\author{
    Jiale Gong$^{1}$ \and
    Pengzhan Jin$^{2}$ \and
    Dongyang Kuang$^{1}$ \and
    Lu Li$^{1}$\thanks{Corresponding author. Email: lilu86@mail.sysu.edu.cn} \and
    Yifa Tang$^{3}$
}
\date{
    {\scriptsize
    $^{1}$School of Mathematics (Zhuhai), Sun Yat-sen University, Zhuhai 519082, China \\
    $^{2}$National Engineering Laboratory for Big Data Analysis and Applications, Peking University, Beijing 100871, China\\
    $^{3}$State Key Laboratory of Mathematical Sciences, Academy of Mathematics and Systems Science, Chinese Academy of Sciences, Beijing 100190, China \\
    }
}

\begin{document}
\maketitle

% ===================== Abstract =====================
\begin{abstract}
Learning dissipative dynamics from discrete observations is essential for reliable long-horizon prediction and physically meaningful parameter identification. For linearly damped Hamiltonian systems, the exact flow is generally not symplectic but conformally symplectic, contracting the canonical symplectic form by a scalar factor that reflects the net dissipation. We propose \emph{Conformal Symplectic Networks with damping identification (CSympNet-ID)}, a discrete-time map-learning framework that learns the one-step flow map directly from snapshot pairs while enforcing exact discrete conformal symplecticity by construction, without penalty terms or projection. The architecture composes an exact symplectic neural core with explicit diagonal scaling layers whose factors are parameterized exponentially by a scalar damping-rate parameter, thereby guaranteeing positivity and interpretability of the learned dissipation factor. We establish a scaling-conjugacy factorization for conformal symplectic maps and derive a pointwise-in-step  density result for CSympNet-ID. We evaluate an irregular-step damped oscillator, a  damped spring-mass chain, a  damped nonlinear cubic oscillator, and additional high-dimensional extensions. CSympNet-ID gives the most favorable overall results among the compared models in the reported experiments, particularly in data-scarce regimes, target contraction-law recovery, and high-dimensional tests where unstructured baselines degrade rapidly.
\end{abstract}

\vspace{0.4em}
\noindent \textbf{Keywords:} damping identification; conformal symplecticity; symplectic networks; map learning; long-horizon prediction.

\section{Introduction}

Dissipation plays a central role in many real-world dynamical systems, affecting stability, long-horizon predictability, and the identification of physically meaningful parameters. A widely used idealization is a linearly damped Hamiltonian system with a scalar damping rate. 
Although more general models may involve anisotropic, nonlinear, or state-dependent dissipation, a single effective damping coefficient arises naturally in reduced-order, modal, and homogenized descriptions of dissipative physics. 
In mechanical and structural dynamics, proportional or modal damping is routinely used to model vibration decay, and each lightly damped mode can often be represented as an oscillator with velocity-proportional damping \cite{Adhikari2006Damping}. 
A direct example from molecular simulation and sampling is underdamped Langevin dynamics, whose deterministic drift contains the linear friction term, where the scalar coefficient controls relaxation, decorrelation, and sampling efficiency \cite{SkeelHartmann2021}. 
In dynamical-system learning, a broad line of work aims to model unknown
dynamical laws or their evolution maps from observation data
\cite{ChenWuXiu2025DUE}.  For dissipative systems, however, matching short-time trajectories alone is often insufficient: a model that reproduces an incorrect dissipation law may yield wrong decay envelopes, distorted phase portraits, or unstable rollouts when extrapolated beyond the observation window.

A principled route to robust extrapolation is to encode physics and geometry as inductive bias, as emphasized in physics-informed and structure-preserving learning frameworks~\cite{Karniadakis2021PhysicsInformedML,Celledoni2021StructurePreservingDeepLearning}. The structure-aware methods discussed below follow two complementary routes: continuous-time models that learn vector fields or variational structures, and discrete-time models that learn one-step flow maps directly.
At the continuous-time level, neural ordinary differential equations represent dynamics by a learned vector field $\dot z=f_\theta(z,t)$ and rely on numerical integrators to realize the flow \cite{Chen2018NeuralODE};
physics-informed neural networks complement this view by enforcing governing equations through residual losses during training \cite{Raissi2019PINN}, with structure-aware variants developed for multiscale regimes \cite{JinMaWu2023APNN}.
For conservative mechanics, this bias can be further strengthened by Hamiltonian or Lagrangian structure:
Hamiltonian Neural Networks (HNNs) learn a Hamiltonian $H_\theta$ and generate the vector field via canonical equations \cite{Greydanus2019HNN}; related structure-preserving reconstruction methods
approximate the underlying Hamiltonian from trajectory data and then derive the corresponding Hamiltonian system, rather than directly
learning the right-hand side \cite{WuQinXiu2020HamiltonianReconstruction};
and Lagrangian Neural Networks (LNNs) learn a Lagrangian and recover Euler-Lagrange dynamics \cite{Cranmer2020LNN}.
Related continuous-time formulations incorporate symplectic structure directly at the level of neural ODEs and can accommodate control/forcing inputs, e.g., Symplectic ODE-Net \cite{Zhong2020SymODEN}.
From a discrete-time perspective, one may bypass derivative estimation and discretization mismatch by learning the one-step flow map directly \cite{QinWuXiu2019DNNGoverningEquations,DavidMehats2023SymplecticLearningHNN}. 
Symplectic networks such as SympNets pursue this strategy while constraining the learned map to the class of symplectic maps, yielding strong long-term behavior for Hamiltonian systems \cite{Jin2020SympNets};
related symplectic-flow/canonical-transformation architectures provide alternative exact-symplectic parameterizations \cite{Li2020NeuralCanonicalTransformations}.
Moreover, in many applications measurements are naturally available as snapshot pairs $(z_n,z_{n+1})$ at a fixed sampling interval, whereas reliable time derivatives are unavailable or severely noise-amplified; this makes learning the one-step map directly both statistically and computationally attractive.
These developments convey a coherent message: long-horizon fidelity benefits from constraining either the vector field or the flow map to the correct structure class.

For damped systems, however, strict symplecticity and energy conservation are generally not the appropriate targets.
In the scalar linearly damped Hamiltonian setting motivated above, the exact flow is typically conformally symplectic rather than symplectic \cite{McLachlanPerlmutter2001}. In canonical coordinates $z=(q,p)$, the time-$t$ flow map $\Phi_t$ satisfies
\begin{equation}\label{eq:conf_sympl_flow_intro}
\big(\mathrm{D}\Phi_t(z)\big)^\top J\mathrm{D}\Phi_t(z)=\lambda^\star(t)J,
\end{equation}
where $J$ is the canonical symplectic matrix and $\lambda^\star(t)>0$ is the conformal factor encoding the net dissipation. 
Thus the damping rate is not merely a phenomenological decay parameter; it appears geometrically as a scalar  contraction factor $\lambda^\star(t)$ of the canonical symplectic form, which is independent of the phase point for each fixed elapsed time \(t\) but varies with \(t\). Preserving this intrinsic structure is therefore important for reliable long-horizon behavior \cite{HairerLubichWanner2006}.
Conformal symplectic integrators provide discrete-time analogues of \eqref{eq:conf_sympl_flow_intro} and are known to reproduce the correct damping factor under appropriate damping relations \cite{BhattFloydMoore2016,ModinSoderlind2011}; see also
\cite{McLachlanQuispel2002Splitting} for the splitting viewpoint.
Related extensions for damped Hamiltonian PDEs include conformal
multi-symplectic integration methods as well as dissipation-preserving discretizations based on conformal
symplectic formulations \cite{MooreNorenaSchober2013ConformalConservation,CaiZhangWang2017JCP}.

From the learning perspective, existing approaches that incorporate dissipation can be viewed as imposing additional structure on top of Hamiltonian learning, but they do so at different levels of abstraction.
Dissipative Hamiltonian Neural Networks (D-HNNs) extend the HNN philosophy by decomposing the vector field into a conservative Hamiltonian part and a dissipative part parameterized, for instance, through a Rayleigh dissipation function; this yields an interpretable separation and enables identification and generalization across friction regimes \cite{SosanyaGreydanus2022DHNN}. Related approaches treat dissipation/forcing as state-dependent external contributions within generalized or pseudo-Hamiltonian parameterizations
\cite{Eidnes2023PHNN}.
A complementary, system-theoretic line is based on energy-based modeling:
port-Hamiltonian neural networks encode dissipation and interconnection through
power ports~\cite{Desai2021PortHamiltonianNN}, while thermodynamics-consistent,
GENERIC/metriplectic, Onsager-principle-based, and identifiable dissipative-learning
frameworks incorporate energy, entropy, dissipation, or reversible-irreversible
structures into the learning process~\cite{GrmelaOettinger1997GENERIC,
Yu2021OnsagerNet,Chen2024CustomThermodynamics,Hernandez2021SPNN,
Zhu2025IdentifiableDissipative}.
Together, these works establish that dissipation can be represented in a structured and interpretable manner, typically by augmenting Hamiltonian structure with additional dissipative mechanisms.

For linearly damped Hamiltonian systems, however, the defining feature is more specific: the exact flow is governed by a \emph{conformal symplectic} law \cite{McLachlanPerlmutter2001}.
This exposes two gaps that are not addressed explicitly by the above learning paradigms.
First, most of these models are formulated in continuous time (as structured vector fields) and rely on a separate time-stepping procedure at rollout; even if the continuous model captures energy balance or dissipation, generic discretization does not guarantee that the resulting one-step map respects the conformal symplectic scaling, which is precisely what controls the correct long-horizon decay factor.
Second, damping identification is often implicit and intertwined with model capacity or regularization, and, in some cases, enforced only approximately via penalty/projection, whereas linear damping induces a globally interpretable one-step dissipation factor  that should be identified directly from snapshot-pair data.
These considerations suggest that, for linearly damped Hamiltonian dynamics, a more aligned learning target is the discrete-time one-step map constrained to the class of conformal symplectic maps, with the net dissipation scalar factor  learned explicitly.

Motivated by this viewpoint, we propose \emph{Conformal Symplectic Networks with damping identification (CSympNet-ID)}. The model learns the discrete-time flow map $\widehat\Phi_{\Theta,h}$ directly from snapshot pairs while enforcing \emph{exact discrete conformal symplecticity}
\[
  (\mathrm{D}\widehat\Phi_{\Theta,h}(z))^\top J\,\mathrm{D}\widehat\Phi_{\Theta,h}(z)=\lambda(h)\,J
\]
for all parameter values, without penalty terms or projection, where $\Theta$ denotes the trainable parameters and $\lambda(h)$ is the learned net dissipation scalar factor.
The architecture ``sandwiches'' an exact symplectic core (e.g., LA-/G-SympNet modules \cite{Jin2020SympNets}) between explicit diagonal scaling layers. 
The scaling factors are parameterized exponentially by a learned scalar damping rate $\hat\gamma$, yielding $\lambda(h)=\exp(-\hat\gamma h)>0$ and making the learned dissipation factor interpretable. 

The main contributions of this work are:
\begin{itemize}
  \item We propose CSympNet-ID, a discrete-time conformal-symplectic map-learning architecture that enforces exact conformal symplecticity by construction, thereby matching the intrinsic geometric dissipation law of linearly damped Hamiltonian systems.

  \item We introduce an interpretable damping-identification mechanism based on  a parsimonious scalar-rate parameterization  that directly identifies the one-step dissipation factor $\lambda(h)=\exp(-\hat{\gamma}h)$ while remaining fully compatible with exact structure preservation.

\item We establish approximation-theoretic guarantees for the proposed framework by proving a scaling-conjugacy factorization for  conformal symplectic maps and deriving  a pointwise-in-step density result for CSympNet-ID.

\item  We provide a systematic empirical validation of CSympNet-ID, evaluating long-horizon accuracy, contraction-law fidelity, damping-rate recovery, and sample efficiency across irregular-step, noisy, nonlinear, and high-dimensional benchmarks.
\end{itemize}

The paper is organized as follows. 
Section~\ref{sec:problem} introduces the problem setting and learning objective.
Section~\ref{sec:conformal} reviews conformal symplectic structure and its discrete analogue.
Section~\ref{sec:method} proposes the CSympNet-ID network.
Section~\ref{sec:theory} proves a pointwise-in-step density result for the proposed model.
Section~\ref{sec:experiments} presents the experimental setup and evaluation protocol for long-horizon prediction and damping identification.
Section~\ref{sec:conclusion} concludes and discusses future directions.

\section{Problem Setting}\label{sec:problem}

Let $z=(q,p)\in\mathbb{R}^{2d}$ and denote the canonical symplectic matrix by
\[
J=\begin{pmatrix}0&I\\-I&0\end{pmatrix},
\]
where $I$ and $0$ denote the $d\times d$ identity and zero matrices, respectively.
We consider the following linearly damped Hamiltonian dynamics with scalar momentum damping,
\begin{equation}\label{eq:damped_system}
\dot q=\nabla_p H(q,p),\qquad
\dot p=-\nabla_q H(q,p)-\gamma p,
\end{equation}
where $H:\mathbb{R}^{2d}\to\mathbb{R}$ is smooth and $\gamma>0$ is a constant damping coefficient. Equivalently,
\[
\dot z = J\nabla H(z) - G z,\qquad
G=\mathrm{diag}(0,\gamma I).
\]

\noindent\textbf{Data.}
We assume discrete observations at sampling times with known positive increments. The training set consists of snapshot triples
\[
\mathcal{D}=\left\{(z_0^{(i)},z_1^{(i)},h^{(i)})\right\}_{i=1}^{N},
\]
where $z_0^{(i)}$ and $z_1^{(i)}$ are the initial and terminal snapshots of the $i$-th pair, respectively, and $h^{(i)}>0$ is the corresponding time increment.
Trajectory data \(\{z_n\}_{n=0}^{N}\) sampled at times \(0=t_0<t_1<\cdots<t_N\), with step sizes \(h_n:=t_n-t_{n-1}\), are included as the special case \((z_0^{(i)},z_1^{(i)},h^{(i)})=(z_{n-1},z_n,h_n)\), \(n=1,\ldots,N\).

\noindent\textbf{Learning objective.}
Given $\mathcal{D}$, our goal is to learn a one-step predictor
$\widehat\Phi_{\Theta,h}:\mathbb{R}^{2d}\to\mathbb{R}^{2d}$
that approximates the  true time-\(h\) flow map \(\Phi_h\) of \eqref{eq:damped_system}.
In addition to accurate prediction, we aim to identify physically meaningful damping information from data.
For linear damping, the intrinsic quantity governing long-time decay is the conformal factor $\lambda^\star(h)$, and we target explicit identification of this factor together with the learned map.

For more general systems
\[
\dot q = \nabla_p H(q,p)-\Gamma_q q,\qquad
\dot p = -\nabla_q H(q,p)-\Gamma_p p,
\]
where $\Gamma_q,\Gamma_p\in\mathbb{R}^{d\times d}$ are constant diagonal matrices, the dynamics can be rewritten as
\[
\dot q=\nabla_p \widetilde H(q,p),
\qquad
\dot p=-\nabla_q \widetilde H(q,p)-\Lambda p,
\qquad
\Lambda:=\Gamma_q+\Gamma_p,
\]
with the modified Hamiltonian
\[
\widetilde H(q,p):=H(q,p)-q^\top\Gamma_q p .
\]
Thus, when $\Lambda=\gamma I$, the system belongs to the scalar linearly damped Hamiltonian class considered in this work, and the induced conformal factor depends only on the total damping rate $\gamma$. More general anisotropic diagonal damping, for which $\Lambda$ is not a scalar multiple of the identity, falls outside the standard scalar-factor conformal symplectic setting and is left for future work.

\section{Conformal Symplecticity}\label{sec:conformal}
We first recall the geometric structure that will be imposed on the learned one-step.
\begin{definition}[Conformal symplectic map]\label{def:csp}
A $C^1$ map $\Phi:\mathbb{R}^{2d}\to\mathbb{R}^{2d}$ is \emph{conformally symplectic}
with scalar factor $\lambda>0$ if
\[
(\mathrm{D}\Phi(z))^\top J\,\mathrm{D}\Phi(z)=\lambda J,\qquad \forall z\in\mathbb{R}^{2d},
\]
where $\mathrm{D}\Phi(z)$ denotes the Jacobian matrix of $\Phi$ with respect to $z$. For a flow family \(\{\Phi_t\}_{t\ge0}\), the corresponding scalar factor may depend on \(t\).
When $\lambda=1$, $\Phi$ is symplectic.
\end{definition}
The next proposition connects this definition to the damped Hamiltonian dynamics introduced in Section~\ref{sec:problem}, showing that
the flow of \eqref{eq:damped_system} is conformally symplectic.
\begin{proposition}[Conformal symplecticity of the damped flow]\label{prop:flow_csp}
Let \(\Phi_t\) be the time-\(t\) flow of \eqref{eq:damped_system}. Then, for every \(t\ge0\),
\begin{equation}\label{eq:flow_factor}
\bigl(\mathrm{D}\Phi_t(z)\bigr)^\top J\,\mathrm{D}\Phi_t(z)=\lambda^\star(t)J,\qquad \lambda^\star(t)=\exp(-\gamma t).
\end{equation}
Thus, for each fixed  time \(t\), the flow map \(\Phi_t\) is conformally symplectic with scalar factor \(\lambda^\star(t)\).
\end{proposition}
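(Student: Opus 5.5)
We plan to use the variational equation. Set $Y(t):=\mathrm{D}\Phi_t(z)$ for a fixed $z$. Differentiating the flow relation $\partial_t\Phi_t(z)=J\nabla H(\Phi_t(z))-G\Phi_t(z)$ with respect to $z$ gives the linear matrix ODE
\[
\dot Y(t)=\bigl(J S(t)-G\bigr)Y(t),\qquad Y(0)=I,
\]
where $S(t):=\nabla^2H(\Phi_t(z))$ is symmetric. The strategy is to derive a scalar-type ODE for the matrix $M(t):=Y(t)^\top J Y(t)$ and solve it in closed form. Since $M(0)=J$, the claim $M(t)=e^{-\gamma t}J$ will follow.

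Differentiating $M$ gives
\[
\dot M=Y^\top\bigl[(JS-G)^\top J+J(JS-G)\bigr]Y .
\]
The Hamiltonian part cancels: $(JS)^\top J+J(JS)=S J^\top J+J^2S=S-S=0$, using $J^\top J=I$ and $J^2=-I$. The damping part is $-(G^\top J+JG)$. With $G=\mathrm{diag}(0,\gamma I)$, a block computation gives
\[
JG=\begin{pmatrix}0&\gamma I\\0&0\end{pmatrix},\qquad G J=\begin{pmatrix}0&0\\-\gamma I&0\end{pmatrix},
\]
so $G^\top J+JG=\gamma J$. This identity is the key step, since it is exactly where the scalar structure of the damping enters. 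Hence
\[
\dot M=-\gamma\,Y^\top J Y=-\gamma M .
\]
This linear ODE with $M(0)=J$ has the unique solution $M(t)=e^{-\gamma t}J$, which gives \eqref{eq:flow_factor} with $\lambda^\star(t)=\exp(-\gamma t)$, independent of $z$.

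The main technical point is not the algebra but justifying the setting. We need $H\in C^2$ (it is assumed smooth) so that $\Phi_t$ is $C^1$ in $z$ and $Y$ satisfies the variational equation. We also need $\Phi_t(z)$ to exist on $[0,t]$. We will assume the flow is defined for the times considered, or else state the identity on the maximal existence interval. Everything else is the routine block-matrix check above.

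As an alternative cross-check, we can use a differential-form argument. Writing the vector field as $X=X_H-\gamma p\,\partial_p$, we have $\mathcal{L}_{X_H}\omega=0$, and by Cartan's formula $\mathcal{L}_{-\gamma p\partial_p}(dq\wedge dp)=-\gamma\,dq\wedge dp$. Together these give $\frac{d}{dt}\Phi_t^*\omega=-\gamma\Phi_t^*\omega$. We will present the matrix version, since it matches Definition~\ref{def:csp} directly.
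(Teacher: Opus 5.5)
Your proposal is correct and follows the paper's proof essentially step for step. Both use the variational equation for $\mathrm{D}\Phi_t$, differentiate $M^\top J M$, cancel the Hamiltonian term via the symmetry of $\nabla^2 H$, use the identity $G^\top J+JG=\gamma J$, and solve $\dot M=-\gamma M$ with $M(0)=J$. Your remark that the flow must exist on $[0,t]$ is a point the paper leaves implicit.
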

\begin{proof}
See \hyperref[prf:proof_flow_csp]{Appendix~\ref*{prf:proof_flow_csp}}.
\end{proof}
To build neural maps with this structure, we use diagonal momentum scaling as the elementary dissipative component.

\begin{lemma}[Scaling is conformally symplectic]\label{lem:scaling_csp}
For all $a>0$, define the diagonal scaling map
\[
\Scale{a}(q,p)=(q, a p).
\]
Then the map $\Scale{a}$ satisfies
\[
(\mathrm{D}\Scale{a})^\top J\,\mathrm{D}\Scale{a} = a\,J.
\]
\end{lemma}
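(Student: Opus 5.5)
The plan is a direct block-matrix computation: $\Scale{a}$ is linear, so its Jacobian is constant, and we only need to check one matrix identity. Write the Jacobian in the same $d\times d$ block partition used for $J$ in Section~\ref{sec:problem}. Since $\Scale{a}(q,p)=(q,ap)$, we have
\[
\mathrm{D}\Scale{a}=\begin{pmatrix} I & 0\\ 0 & aI\end{pmatrix}=\diag(I,aI),
\]
independently of $z$. This matrix is symmetric, so its transpose is itself.

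Next we form the product in two steps. First,
\[
J\,\mathrm{D}\Scale{a}=\begin{pmatrix}0&I\\-I&0\end{pmatrix}\begin{pmatrix} I & 0\\ 0 & aI\end{pmatrix}=\begin{pmatrix}0&aI\\-I&0\end{pmatrix}.
\]
Left-multiplying by $\diag(I,aI)$ scales the second block row by $a$, which gives
\[
(\mathrm{D}\Scale{a})^\top J\,\mathrm{D}\Scale{a}=\begin{pmatrix}0&aI\\-aI&0\end{pmatrix}=aJ.
\]
This holds at every $z\in\mathbb{R}^{2d}$. By Definition~\ref{def:csp}, $\Scale{a}$ is therefore conformally symplectic with factor $a$. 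The hypothesis $a>0$ is used only so that the factor matches the positivity requirement $\lambda>0$ in Definition~\ref{def:csp}; the algebraic identity itself holds for any real $a$.

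There is no substantive obstacle. The only point needing care is the block ordering: the scaling must act on the $p$-block, which is the second block row and column. That position is what makes both off-diagonal blocks of $J$ pick up exactly one factor of $a$. If instead we scaled both blocks by $\sqrt a$, we would get the same factor $a$.
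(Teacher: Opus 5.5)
Your proof is correct and follows the same route as the paper. Both arguments note that $\Scale{a}$ is linear with constant Jacobian $\diag(I,aI)$ and verify $(\mathrm{D}\Scale{a})^\top J\,\mathrm{D}\Scale{a}=aJ$ by direct block multiplication; your side remarks (that the identity holds for any real $a$, and that uniform scaling by $\sqrt{a}$ gives the same factor) are accurate but not needed.
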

\begin{proof}
See \hyperref[prf:proof_scaling_csp]{Appendix~\ref*{prf:proof_scaling_csp}}.
\end{proof}

The proposed architecture is constructed by composing scaling maps with symplectic modules, so we need the following closure property.

\begin{lemma}[Composition rule]\label{lem:csp_composition}
Let $r\ge 1$, $F:U\to\mathbb{R}^{2d}$ and $G:V\to U$ be $C^r$ maps such that
\[
(\mathrm DF(x))^\top J\,\mathrm DF(x)=\alpha J,\qquad x\in U,
\]
and
\[
(\mathrm DG(y))^\top J\,\mathrm DG(y)=\beta J,\qquad y\in V,
\]
for some constants $\alpha,\beta>0$. Then $F\circ G$ is $C^r$ and satisfies
\[
(\mathrm D(F\circ G)(y))^\top J\,\mathrm D(F\circ G)(y)=\alpha\beta\,J,\qquad y\in V.
\]
\end{lemma}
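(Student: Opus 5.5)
The plan is to combine the chain rule with the two hypotheses on $\mathrm DF$ and $\mathrm DG$, applied at matching base points.

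First, the regularity claim. Since $F$ and $G$ are $C^r$ with $r\ge1$, and $G$ maps $V$ into $U$ where $F$ is defined, the composition $F\circ G:V\to\mathbb{R}^{2d}$ is $C^r$. This is the standard fact that compositions of $C^r$ maps are $C^r$, proved by induction on $r$ using the chain rule and the Leibniz rule for products of matrix-valued functions. The chain rule also gives, for every $y\in V$,
\[
\mathrm D(F\circ G)(y)=\mathrm DF\bigl(G(y)\bigr)\,\mathrm DG(y).
\]

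Second, the identity. Fix $y\in V$, set $x:=G(y)\in U$, and write $A:=\mathrm DF(x)$ and $B:=\mathrm DG(y)$. Then
\[
(AB)^\top J\,(AB)=B^\top\bigl(A^\top J A\bigr)B=B^\top(\alpha J)B=\alpha\,B^\top J B=\alpha\beta\,J.
\]
The first step is associativity of matrix multiplication. The second uses the hypothesis on $F$ at the point $x\in U$, which is legitimate precisely because $G(V)\subset U$. The remaining steps pull the scalar $\alpha$ out and use the hypothesis on $G$ at $y$. Since $y$ was arbitrary, the identity holds on all of $V$.

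There is no real obstacle here. The only point needing care is to evaluate the hypothesis on $F$ at $G(y)$ rather than at $y$. This is where the inclusion $G(V)\subset U$ and the fact that $\alpha$ is a constant (independent of the base point) are used. The same argument gives the corresponding result for any finite composition: applying the lemma inductively, the factors multiply. For example, combining with Lemma~\ref{lem:scaling_csp}, the composition $\Scale{a}\circ\Psi\circ\Scale{b}$ with $\Psi$ symplectic is conformally symplectic with factor $ab$.
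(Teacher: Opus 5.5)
Your proposal is correct and follows essentially the same argument as the paper: the chain rule $\mathrm D(F\circ G)(y)=\mathrm DF(G(y))\,\mathrm DG(y)$, the hypothesis on $F$ evaluated at $G(y)\in U$, and then the hypothesis on $G$ at $y$. Your additional justification of the $C^r$ regularity of $F\circ G$, which the paper leaves implicit, is a harmless and welcome supplement.
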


\begin{proof}
See \hyperref[prf:proof_csp_composition]{Appendix~\ref*{prf:proof_csp_composition}}.
\end{proof}

\section{Method: CSympNet-ID}\label{sec:method}

In this section, we construct a neural one-step map $\widehat\Phi_{\Theta,h}$
for the finite-time transition associated with \eqref{eq:damped_system}.
The map is learned from snapshot pairs and satisfies
\[
\bigl(\mathrm{D}\widehat\Phi_{\Theta,h}(z)\bigr)^\top J\,\mathrm{D}\widehat\Phi_{\Theta,h}(z)=\lambda(h)\,J,\qquad \lambda(h)>0,
\]
for all $z$, where $\lambda(h)$ is the learned scalar conformal factor. We first introduce the symplectic core used in the proposed network.

\noindent\textbf{Symplectic core.}
Let $\Psi_{\theta,h}:\mathbb{R}^{2d}\to\mathbb{R}^{2d}$ be an exact symplectic neural one-step map, such as an LA-/G-SympNet \cite{Jin2020SympNets}, satisfying
\[
(\mathrm{D}\Psi_{\theta,h}(z))^\top J\,\mathrm{D}\Psi_{\theta,h}(z)=J,
\qquad \forall z.
\]
Here $h>0$ denotes the prescribed step size of the autonomous one-step update, and the learnable parameters of the symplectic core are denoted by $\theta$.

\noindent\textbf{CSympNet-ID architecture.}
We compose two identical diagonal scaling maps with the symplectic core:
\begin{equation}\label{eq:csympnet_def}
\widehat\Phi_{\Theta,h}
:=
\Scale{a(h)} \circ \Psi_{\theta,h} \circ \Scale{a(h)},\quad a(h)=\exp\!\left(-\hat\gamma h/2\right),
\end{equation}
where 
$\Theta=(\theta,\hat\gamma)$, with \(\hat\gamma\ge0\).
The corresponding learned conformal factor is 
\begin{equation}\label{eq:learned_lambda}
\lambda(h)=a(h)^2=\exp(-\hat\gamma h).
\end{equation}
This symmetric composition separates conservative transport, modeled by the symplectic one-step core $\Psi_{\theta,h}$, from linear dissipation, captured by the explicit scaling map $\Scale{a(h)}$. 
Architecture of CSympNet-ID is illustrated in Figure \ref{fig:csympnet_arch}.

\begin{figure}[H]
\centering
\resizebox{0.8\textwidth}{!}{
\begin{tikzpicture}[
    font=\small,
    >={Latex[length=2.2mm,width=1.6mm]},
    block/.style={
        draw=blue!60!black,
        rounded corners=5pt,
        thick,
        minimum height=1.25cm,
        minimum width=2.35cm,
        align=center,
        fill=blue!3
    },
    scaleblock/.style={
        draw=blue!60!black,
        rounded corners=5pt,
        thick,
        minimum height=1.25cm,
        minimum width=1.65cm,
        align=center,
        fill=blue!6
    },
    coreblock/.style={
        draw=black!55,
        rounded corners=5pt,
        thick,
        minimum height=1.35cm,
        minimum width=3.55cm,
        align=center,
        fill=gray!5
    },
    module/.style={
        draw=blue!60!black,
        rounded corners=5pt,
        thick,
        minimum height=1.05cm,
        minimum width=2.15cm,
        align=center,
        fill=blue!3
    },
    arrow/.style={->, thick, black!85},
    title/.style={font=\bfseries\small, text=blue!55!black, align=center},
    eqn/.style={align=left, font=\small}
]

% ---------- Panel (a): high-level architecture ----------
\node[font=\bfseries\normalsize] at (-6.75,1.25) {(a)};

\node[block] (input) at (-5.55,0)
    {$z_n$\\[-0.5mm]$=(q_n,p_n)$};

\node[scaleblock] (scalein) at (-3.15,0)
    {$S_{a_n}$};

\node[coreblock] (core) at (0,0)
    {\textbf{exact symplectic core}\\[-0.5mm]$\Psi_{\theta,h_n}$};

\node[scaleblock] (scaleout) at (3.15,0)
    {$S_{a_n}$};

\node[block, minimum width=2.85cm] (output) at (5.65,0)
    {$\widehat z_{n+1}$\\[-0.5mm]$=(\widehat q_{n+1},\widehat p_{n+1})$};

\draw[arrow] (input) -- (scalein);
\draw[arrow] (scalein) -- (core);
\draw[arrow] (core) -- (scaleout);
\draw[arrow] (scaleout) -- (output);

\node[title, above=0.18cm of input] {Input};
\node[title, above=0.18cm of scalein] {Scaling};
\node[title, above=0.18cm of core] {Symplectic core};
\node[title, above=0.18cm of scaleout] {Scaling};
\node[title, above=0.18cm of output] {Output};

% Full CSympNet-ID map and conformal factor: placed on the left side
\node[eqn, anchor=west] (fullEq) at (-6.35,-1.65)
{
$\widehat\Phi_{\Theta,h_n}
=
S_{a_n}\circ \Psi_{\theta,h_n}\circ S_{a_n}$\\[1mm]
$ a_n:=a(h_n)=\exp(-\hat\gamma h_n/2)$
};

% ---------- Panel (b): expanded symplectic core ----------
\node[module] (m1) at (-3.55,-4.35)
    {symplectic\\module $1$};

\node[module] (m2) at (-0.95,-4.35)
    {symplectic\\module $2$};

\node[align=center] (dots) at (1.20,-4.35)
    {$\cdots$};

\node[module] (mM) at (3.55,-4.35)
    {symplectic\\module $M$};

\node[
    draw=black!45,
    rounded corners=7pt,
    thick,
    fit=(m1)(m2)(dots)(mM),
    inner xsep=0.75cm,
    inner ysep=0.90cm,
    label={[font=\small]above:$\Psi_{\theta,h_n}$}
] (corebox) {};

\node[font=\bfseries\normalsize] at ($(corebox.north west)+(0.38,-0.38)$) {(b)};

\draw[arrow] ($(corebox.west)+(-0.95,0)$) -- (m1.west);
\draw[arrow] (m1) -- (m2);
\draw[arrow] (m2) -- (dots);
\draw[arrow] (dots) -- (mM);
\draw[arrow] (mM.east) -- ($(corebox.east)+(0.95,0)$);

\node[font=\itshape\small] at ($(corebox.south)+(0,0.28)$)
    {e.g., LA-/G-SympNet stack};

% Single vertical dashed guide line: only indicates expansion of the symplectic core
\draw[dashed, thick, black!50]
    (core.south) -- (corebox.north);

\end{tikzpicture}
}
\caption{
Architecture of CSympNet-ID.
For a prescribed step size $h_n$, the input state $z_n=(q_n,p_n)$ is first mapped by the diagonal momentum scaling
$S_{a_n}(q,p)=(q,a_n p)$.
The scaled state is then passed through the exact symplectic core $\Psi_{\theta,h_n}$, and the same scaling is applied once more to obtain the predicted state
$\widehat z_{n+1}=(\widehat q_{n+1},\widehat p_{n+1})$.
The complete one-step map is
$\widehat\Phi_{\Theta,h_n}=S_{a_n}\circ\Psi_{\theta,h_n}\circ S_{a_n}$,
where
$a_n=a(h_n)=\exp(-\hat\gamma h_n/2)$ and
$\lambda(h_n)=a_n^2=\exp(-\hat\gamma h_n)$.
The lower panel expands only the symplectic core as a stack of exact symplectic modules, such as LA-/G-SympNet blocks.
}
\label{fig:csympnet_arch}
\end{figure}

The symmetric placement of the two damping layers is motivated by the splitting viewpoint used in conformal symplectic integrators for linearly damped Hamiltonian systems \cite{BhattFloydMoore2016}. Such dynamics can be viewed as the combination of a conservative Hamiltonian flow and a linear dissipative flow, and a natural time-centered discretization applies half damping, then a symplectic update, and then the remaining half damping. This leads to the sandwich form in \eqref{eq:csympnet_def}. 
A single scaling layer before or after the symplectic core would also preserve conformal symplecticity if its factor were chosen correctly, but it would apply all dissipation on one side of the conservative transport. Interleaving damping factors inside the SympNet core is also possible, but independent layer-wise damping rates would be non-identifiable because only their product determines the global conformal factor. Thus, the symmetric sandwich gives a minimal, time-centered, and interpretable parameterization tied to one scalar damping rate.

\noindent\textbf{Training objective and scalar-rate parameterization.}
For snapshot pairs, we minimize the one-step mean-squared error
\begin{equation}\label{eq:one_step_loss}
\mathcal L_{\rm 1step}(\Theta)
=
\frac{1}{N}\sum_{i=1}^N
\left\|
z^{(i)}_1-\widehat\Phi_{\Theta,h^{(i)}}(z^{(i)}_0)
\right\|_2^2 .
\end{equation}
In the implementation, the nonnegativity constraint is enforced by setting
\[
\hat\gamma=|\beta|,
\qquad \beta\in\mathbb R,
\]
where \(\beta\) is an unconstrained trainable scalar.
This loss is used both for training and for model selection on the validation set. 
The scaling factor is parameterized exponentially as in \eqref{eq:learned_lambda}; hence, the constraint \(\hat\gamma\ge0\) implies \(\lambda(h)\le1\) for all \(h>0\).

\noindent\textbf{Variable step sizes.}
For variable-step data with increments $\{h^{(i)}\}$, CSympNet-ID reuses the same model parameters $\Theta=(\theta,\hat\gamma)$ across all snapshot pairs. 
The prescribed step size enters the model in two ways. 
First, following the step-dependent SympNet construction for irregularly sampled Hamiltonian data \cite{Jin2020SympNets}, $h^{(i)}$ scales the increments of the triangular SympNet shear modules, so that the resulting core map $\Psi_{\theta,h^{(i)}}$ remains symplectic for every prescribed $h^{(i)}>0$.
Second, \(h^{(i)}\) sets the explicit dissipative factors \(a(h^{(i)})\) and \(\lambda(h^{(i)})\) defined similarly as in \eqref{eq:csympnet_def} and \eqref{eq:learned_lambda}.
During rollout with step sizes $\{h_n\}$, CSympNet-ID applies
$\widehat z_{n+1}=\widehat\Phi_{\Theta,h_n}(\widehat z_n)$, thereby enforcing a consistent conformal contraction law across mixed sampling intervals.

\begin{proposition}[Exact conformal symplecticity of CSympNet-ID]\label{prop:exact_csp}
For any \(\Theta=(\theta,\hat\gamma)\) with \(\hat\gamma\ge0\), and any \(h>0\), map \eqref{eq:csympnet_def} satisfies
\begin{equation}\label{eq:exact_csp}
\bigl(\mathrm{D}\widehat\Phi_{\Theta,h}(z)\bigr)^\top J\,\mathrm{D}\widehat\Phi_{\Theta,h}(z)
=
\lambda(h)\,J,
\qquad
\lambda(h)=\exp(-\hat\gamma h),
\end{equation}
for all $z\in\mathbb{R}^{2d}$.
\end{proposition}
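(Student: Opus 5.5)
The plan is to read \eqref{eq:csympnet_def} as a three-fold composition and apply Lemma~\ref{lem:csp_composition} twice, feeding in Lemma~\ref{lem:scaling_csp} for the outer factors and the exact symplecticity of the core for the middle factor.

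\textbf{Step 1: admissibility of the scaling factor.} Fix $\Theta=(\theta,\hat\gamma)$ with $\hat\gamma\ge0$ and fix $h>0$. Set $a:=a(h)=\exp(-\hat\gamma h/2)$. Then $a>0$, since the exponential is positive. Note that positivity is the only property of $a$ we need; the sign of $\hat\gamma$ plays no role in the structure identity.

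\textbf{Step 2: the three factors.} By Lemma~\ref{lem:scaling_csp}, $\Scale{a}$ is a linear map, hence $C^\infty$ on all of $\mathbb{R}^{2d}$, and it satisfies $(\mathrm{D}\Scale{a})^\top J\,\mathrm{D}\Scale{a}=aJ$. By the defining property of the symplectic core, $\Psi_{\theta,h}$ satisfies the same type of identity with constant $1$ at every $z$. It is $C^1$ (indeed smooth when the activations are smooth, as in LA-/G-SympNets).

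\textbf{Step 3: first composition.} Apply Lemma~\ref{lem:csp_composition} with $F=\Psi_{\theta,h}$, $G=\Scale{a}$, $U=V=\mathbb{R}^{2d}$, $\alpha=1$, $\beta=a$, and $r=1$. This shows that $\Psi_{\theta,h}\circ\Scale{a}$ is $C^1$ with conformal factor $a$.

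\textbf{Step 4: second composition.} Apply the lemma again with $F=\Scale{a}$ and $G=\Psi_{\theta,h}\circ\Scale{a}$, now with $\alpha=a$ and $\beta=a$. This gives
\[
\bigl(\mathrm{D}\widehat\Phi_{\Theta,h}(z)\bigr)^\top J\,\mathrm{D}\widehat\Phi_{\Theta,h}(z)=a^2J
\]
for all $z\in\mathbb{R}^{2d}$. Since $a^2=\exp(-\hat\gamma h)=\lambda(h)$ by \eqref{eq:learned_lambda}, this is exactly \eqref{eq:exact_csp}.

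\textbf{Where care is needed.} There is no real obstacle here. The only point to check is the regularity and domain hypotheses of Lemma~\ref{lem:csp_composition}: each factor must be $C^1$ on all of $\mathbb{R}^{2d}$ with values in $\mathbb{R}^{2d}$. This holds because $\Scale{a}$ is linear and the SympNet core is a finite composition of smooth shear maps. I will state explicitly that this is assumed of the core.

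As an optional sanity check, one can bypass the lemma and use the chain rule directly. Writing $D_a=\diag(I,aI)$, we have
\[
\mathrm{D}\widehat\Phi_{\Theta,h}(z)=D_a\,\mathrm{D}\Psi_{\theta,h}(az')\,D_a,
\]
where $az'$ denotes the point $\Scale{a}(z)$. Then $D_a^\top J D_a=aJ$ together with the symplecticity of the core gives
\[
D_a^\top\,\mathrm{D}\Psi^\top (aJ)\,\mathrm{D}\Psi\,D_a=a\,D_a^\top J D_a=a^2J,
\]
which agrees with Step~4.
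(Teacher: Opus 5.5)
Your proposal is correct and follows the paper's proof essentially verbatim. Lemma~\ref{lem:scaling_csp} gives factor $a(h)$ for each scaling layer, the core contributes factor $1$, two applications of Lemma~\ref{lem:csp_composition} yield $a(h)^2J$, and $a(h)^2=\exp(-\hat\gamma h)=\lambda(h)$; your explicit check of the $C^1$ hypothesis and the direct chain-rule verification are harmless additions that the paper leaves implicit.
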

\begin{proof}
See \hyperref[prf:proof_exact_csp]{Appendix~\ref*{prf:proof_exact_csp}}.
\end{proof}

\section{Approximation Theory for Conformal Symplectic Maps}\label{sec:theory}
This section provides an approximation-theoretic analysis of CSympNet-ID in the  conformal symplectic class. The key step is a scaling-conjugacy argument, which reduces conformal symplectic approximation to symplectic approximation.

\noindent\textbf{$C^r$ norms and density on compact sets.}
Let $U\subset\mathbb{R}^m$ be open and $W\subset U$ be compact. For $f\in C^r(W;\mathbb{R}^n)$, define
\[
\|f\|_{C^r(W)}:=\sum_{|\alpha|\le r}\max_{1\le i\le n}\ \sup_{x\in W}\bigl|D^\alpha f_i(x)\bigr|.
\]
A family $\mathcal{F}\subset C^r(U;\mathbb{R}^n)$ is said to be \emph{$r$-uniformly dense on compacta} in a set $\mathcal{G}\subset C^r(U;\mathbb{R}^n)$
if for any $g\in\mathcal{G}$, any compact $W\subset U$, and any $\varepsilon>0$, there exists $f\in\mathcal{F}$ such that
\[
\|f-g\|_{C^r(W)}<\varepsilon.
\]

We next introduce the map classes used in the approximation result.

\noindent\textbf{Symplectic and conformal symplectic map classes.}
Let $\mathrm{SP}^r(U)$ denote the set of $C^r$ maps on $U\subset\mathbb{R}^{2d}$ satisfying
\[
(\mathrm{D}\Psi(z))^\top J\,\mathrm{D}\Psi(z)=J,\qquad \forall z\in U.
\]
For a given scalar factor $\lambda>0$, let $\mathrm{CSP}^r_\lambda(U)$ denote the set of $C^r$ maps on $U$ satisfying
\[
(\mathrm{D}\Phi(z))^\top J\,\mathrm{D}\Phi(z)=\lambda J,\qquad \forall z\in U.
\]

For the approximation analysis below, we fix a step size \(h>0\) and a scalar conformal factor \(\lambda(h)>0\). 
In CSympNet-ID, this factor is given by the exponential parameterization in \eqref{eq:learned_lambda} once \(\hat\gamma\) is fixed.
The next theorem records the scaling-conjugacy relation underlying the density result.

\begin{theorem}[Scaling-conjugacy factorization at a fixed  step]\label{thm:conjugacy}
Let \(h>0\) be fixed, and set
$a:=\sqrt{\lambda(h)}$. If $\Phi_h\in \mathrm{CSP}^r_{\lambda(h)}(U)$, then
\[
\Psi_h:=\Scale{a^{-1}}\circ \Phi_h\circ \Scale{a^{-1}}
\]
is well defined on \(\Scale{a}(U)\) and belongs to \(\mathrm{SP}^r(\Scale{a}(U))\). Equivalently, on \(U\), we have
\[
\Phi_h=\Scale{a}\circ\Psi_h\circ\Scale{a}.
\]
\end{theorem}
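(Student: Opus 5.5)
The plan is to verify well-definedness and smoothness, then compute the conformal factor of $\Psi_h$ using Lemma~\ref{lem:scaling_csp} together with Lemma~\ref{lem:csp_composition}, and finally read off the factorization.

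\textbf{Step 1: domains.} Since $a>0$, the map $\Scale{a}$ is a linear isomorphism of $\mathbb{R}^{2d}$ with inverse $\Scale{a^{-1}}$. Hence $\Scale{a}(U)$ is open, and $\Scale{a^{-1}}$ maps $\Scale{a}(U)$ bijectively onto $U$. Consequently $\Phi_h\circ\Scale{a^{-1}}$ is defined on $\Scale{a}(U)$. Applying the linear map $\Scale{a^{-1}}$ afterwards is defined everywhere. So $\Psi_h$ is a well-defined map $\Scale{a}(U)\to\mathbb{R}^{2d}$. It is $C^r$ as a composition of a $C^r$ map with linear maps.

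\textbf{Step 2: the factor.} Lemma~\ref{lem:scaling_csp}, applied with $a^{-1}>0$, gives $(\mathrm{D}\Scale{a^{-1}})^\top J\,\mathrm{D}\Scale{a^{-1}}=a^{-1}J$ on all of $\mathbb{R}^{2d}$. Apply Lemma~\ref{lem:csp_composition} twice, taking care to match the domains:
\begin{itemize}[nosep]
\item first with $F=\Phi_h$ on $U$ (factor $\lambda(h)$) and $G=\Scale{a^{-1}}$ from $V=\Scale{a}(U)$ into $U$ (factor $a^{-1}$);
\item then with the outer $F=\Scale{a^{-1}}$ on $\mathbb{R}^{2d}$.
\end{itemize}
This yields, for all $y\in\Scale{a}(U)$,
\[
(\mathrm{D}\Psi_h(y))^\top J\,\mathrm{D}\Psi_h(y)=a^{-1}\lambda(h)a^{-1}J=\lambda(h)a^{-2}J=J,
\]
since $a^2=\lambda(h)$. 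Therefore $\Psi_h\in\mathrm{SP}^r(\Scale{a}(U))$.

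\textbf{Step 3: factorization.} For $z\in U$ we have $\Scale{a}(z)\in\Scale{a}(U)$. Using $\Scale{a}\circ\Scale{a^{-1}}=\mathrm{id}$,
\[
\Scale{a}\circ\Psi_h\circ\Scale{a}(z)=\Scale{a}\Scale{a^{-1}}\Phi_h\Scale{a^{-1}}\Scale{a}(z)=\Phi_h(z).
\]
The only real obstacle is the bookkeeping of domains in Lemma~\ref{lem:csp_composition}, namely ensuring $G(V)\subset U$. This is handled by the choice $V=\Scale{a}(U)$. Everything else is direct computation.
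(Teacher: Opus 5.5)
Your proposal is correct and follows the paper's proof step for step. The three steps match: well-definedness because $\Scale{a^{-1}}$ maps $\Scale{a}(U)$ onto $U$; the factor computation $a^{-1}\lambda(h)a^{-1}=1$ via Lemma~\ref{lem:scaling_csp} and two applications of Lemma~\ref{lem:csp_composition}; and the factorization read off from $\Scale{a}\circ\Scale{a^{-1}}=\mathrm{id}$. Your domain bookkeeping (openness of $\Scale{a}(U)$, $C^r$ regularity, and $G(V)\subset U$) is slightly more explicit than the paper's, but the argument is the same.
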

\begin{proof}
See \hyperref[prf:proof_conjugacy]{Appendix~\ref*{prf:proof_conjugacy}}.
\end{proof}

The factorization above reduces the density transfer to the approximation capacity of the symplectic core. 
We use the SympNet approximation theorem of Jin et al.~\cite{Jin2020SympNets}: LA-/G-SympNets with an \(r\)-finite activation are \(r\)-uniformly dense on compacta in \(\mathrm{SP}^r(V)\) for every open set \(V\subset\mathbb R^{2d}\). 
Here \(r\)-finite means \( \sigma\in C^r(\mathbb R)\) and
\[
0<\int_{-\infty}^{\infty}\left|\sigma^{(r)}(x)\right|\,\mathrm{d}x<\infty,
\]
which is satisfied by sigmoid-type activations; for G-SympNet cores, the scalar generating functions can be chosen as antiderivatives of the sigmoid activation. 
For each prescribed \(h>0\), we assume that the fixed-\(h\) core family \(\{\Psi_{\theta,h}\}\) satisfies this density property, so the following result is pointwise in \(h\).

\begin{theorem}[Pointwise-in-step density of CSympNet-ID]\label{thm:density_transfer}
Let \(r\ge1\), \(h>0\), and let \(U\subset\mathbb R^{2d}\) be open. Set
$a:=\sqrt{\lambda(h)}$.
Assume that, for this prescribed step size \(h\), the neural core family
$\{\Psi_{\theta,h}\}$
is \(r\)-uniformly dense on compacta in \(\mathrm{SP}^r(V)\) for every open set \(V\subset\mathbb{R}^{2d}\). 
Then the CSympNet-ID family
$\left\{
\Scale{a}\circ\Psi_{\theta,h}\circ\Scale{a}
\right\}$ is \(r\)-uniformly dense on compacta in
\(\mathrm{CSP}^r_{\lambda(h)}(U)\).
\end{theorem}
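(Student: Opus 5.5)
The proof reduces the conformal symplectic approximation problem to the symplectic one by using Theorem~\ref{thm:conjugacy}, and then pulls the error estimate back through the fixed linear maps $\Scale{a}$ and $\Scale{a^{-1}}$.

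\textbf{Step 1 (factorization).} Fix $\Phi\in\mathrm{CSP}^r_{\lambda(h)}(U)$, a compact $W\subset U$, and $\varepsilon>0$. By Theorem~\ref{thm:conjugacy}, the map $\Psi:=\Scale{a^{-1}}\circ\Phi\circ\Scale{a^{-1}}$ belongs to $\mathrm{SP}^r(V)$ with $V:=\Scale{a}(U)$. Since $\Scale{a}$ is a linear isomorphism of $\mathbb{R}^{2d}$, the set $V$ is open and $W':=\Scale{a}(W)\subset V$ is compact. On $U$ we have $\Phi=\Scale{a}\circ\Psi\circ\Scale{a}$.

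\textbf{Step 2 (approximate the core).} By the density hypothesis applied to this $V$, for any $\delta>0$ there is $\theta$ with $\|\Psi_{\theta,h}-\Psi\|_{C^r(W')}<\delta$.

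\textbf{Step 3 (transfer the bound).} On $W$ we write
\[
\Scale{a}\circ\Psi_{\theta,h}\circ\Scale{a}-\Phi=\Scale{a}\circ\bigl(\Psi_{\theta,h}-\Psi\bigr)\circ\Scale{a},
\]
which uses the linearity of the outer $\Scale{a}$. We then estimate each $D^\alpha$ with $|\alpha|\le r$ by the chain rule for composition with the linear map $\Scale{a}$. Each $x$-derivative of $g\circ\Scale{a}$ is a derivative of $g$ evaluated at $\Scale{a}x\in W'$, multiplied by $a^{k}$, where $k\le|\alpha|$ counts the $p$-derivatives. Left multiplication by $\Scale{a}$ multiplies the $p$-components by $a$. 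This gives
\[
\|\Scale{a}\circ\Psi_{\theta,h}\circ\Scale{a}-\Phi\|_{C^r(W)}\le C\,\|\Psi_{\theta,h}-\Psi\|_{C^r(W')},
\]
with $C=\max(1,a)^{r+1}$. Choosing $\delta=\varepsilon/C$ yields the claim.

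The only genuinely delicate point is Step~1: we must check that the domains match. We need $\Psi_{\theta,h}$ to be defined on $\Scale{a}(U)$, which holds because SympNets are defined on all of $\mathbb{R}^{2d}$. We also need the density hypothesis to be available on the transformed open set $V=\Scale{a}(U)$ rather than on $U$ itself. This is exactly why the hypothesis is stated for every open $V$. The constant computation in Step~3 is routine.
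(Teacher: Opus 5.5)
Your proposal is correct and follows essentially the same route as the paper. You conjugate by $\Scale{a^{-1}}$ via Theorem~\ref{thm:conjugacy}, apply the density hypothesis on the open set $\Scale{a}(U)$ with the compact set $\Scale{a}(W)$, and transfer the $C^r$ estimate through the fixed linear scalings; your explicit constant $C=\max(1,a)^{r+1}$ simply makes concrete the paper's unspecified $C(r,a,W)$.
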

\begin{proof}
See \hyperref[prf:proof_density_transfer]{Appendix~\ref*{prf:proof_density_transfer}}.
\end{proof}

\section{Experimental Setup and Results}\label{sec:experiments}

This section reports the numerical evaluation of CSympNet-ID on linearly damped Hamiltonian benchmarks. Section~\ref{sec:exp_systems} presents the test problems and data-generation procedure. Section~\ref{sec:models_metrics} describes the compared models, training protocol, and evaluation metrics. Section~\ref{sec:numerical_experiments} presents the three groups of experiments, and Section~\ref{sec:exp_discussion} summarizes the main observations.

\subsection{Test problems and data generation}\label{sec:exp_systems}

For each test problem, we specify the Hamiltonian $H(q,p)$, and the corresponding damped equations of motion are obtained from \eqref{eq:damped_system} with damping coefficient $\gamma=0.05$. The variables satisfy $q,p\in\mathbb{R}^{d}$, so the phase-space dimension is $2d$. In the first test problem S1, we use $d=1$, while in the second S2 and third S3 experiments we use $d=10$. We further consider $d=100$ extensions for S2 and S3 in the high-dimensional tests. Unless otherwise stated, rollouts are evaluated on $[0,10]$.

\paragraph{S1: Damped harmonic oscillator.}
We consider $d=1$, and the Hamiltonian is
\[
H(q,p)=\frac12\omega^2q^2+\frac12p^2,
\]
where $\omega$ is the natural frequency. In all reported experiments, we set $\omega=1$.
This two-dimensional benchmark tests damping-rate estimation under irregular training increments $h\in[0.05,0.15]$.

\paragraph{S2: Damped spring-mass chain.}
The main experiment uses $d=10$, and the high-dimensional extension uses $d=100$. The Hamiltonian is
\[
H(q,p)=\frac12\|p\|^2+\frac{k}{2}\sum_{i=1}^{d-1}(q_{i+1}-q_i)^2,
\]
where $k$ is the nearest-neighbor spring stiffness, set to $k=1$ in all experiments. This linear chain tests whether the scalar conformal factor can be recovered accurately as the phase-space dimension increases.

\paragraph{S3: Damped nonlinear cubic oscillator.}
The main experiment uses $d=10$, and the high-dimensional extension uses $d=100$. The nonlinear benchmark uses
\[
H(q,p)=\frac12\omega^2\|q\|^2+\frac12\|p\|^2+
\alpha\left(\sum_{i=1}^{d}{q_i}\right)^3,
\]
where $\omega$ controls the linear oscillation frequency and $\alpha$ controls the strength of the cubic nonlinearity. We set $\omega=1$. The main $d=10$ experiment uses $\alpha=0.005$, while the $d=100$ extension uses $\alpha=10^{-4}$ to keep the reference dynamics stable over $[0,10]$.

\paragraph{Data generation.}
The clean snapshot data are generated by a second-order symmetric conformal-symplectic splitting with a St\"ormer-Verlet  core. 
The St\"ormer-Verlet step is
\[
 p_{n+1/2}=p_n-\frac{h}{2}\nabla V(q_n),\qquad
 q_{n+1}=q_n+h p_{n+1/2},\qquad
 p_{n+1}=p_{n+1/2}-\frac{h}{2}\nabla V(q_{n+1}).
\]
The full reference update is
\[
z_{n+1}=\Scale{a(h_{\mathrm{ref}})}\circ \Psi^{\mathrm{ref}}_{h_{\mathrm{ref}}}\circ \Scale{a(h_{\mathrm{ref}})}(z_n),\qquad
 a(h_{\mathrm{ref}})=\exp(-\gamma h_{\mathrm{ref}}/2),
\]
with $h_{\mathrm{ref}}=10^{-3}$. Thus the reference update is  conformally symplectic with the scalar target factor $\lambda^\star(h_{\mathrm{ref}})=\exp(-\gamma h_{\mathrm{ref}})$. To check that the reported conclusions are not artifacts of this structure-preserving reference integrator, we also repeated representative tests using exact matrix-exponential solutions for the linear S1 and S2 systems and a high-accuracy adaptive Runge-Kutta solver with tight tolerances for the nonlinear S3 system; the qualitative conclusions on rollout accuracy and damping-rate recovery remained unchanged, as discussed in Section~\ref{sec:exp_discussion}.

In the noise-robustness experiments, observations are perturbed by component-wise relative
Gaussian noise. For each snapshot pair, we set
\[
\widetilde z^{(i)}_\ell
=
z^{(i)}_\ell+\sigma s_\ell\odot \xi^{(i)}_\ell,
\qquad
\xi^{(i)}_\ell\sim \mathcal N(0,I_{2d}),\quad \ell\in\{0,1\},
\]
where \(i\) indexes the snapshot pair, \(\ell=0\) and \(\ell=1\) denote the initial and terminal
snapshots of the pair, respectively, and \(\odot\) denotes component-wise multiplication.
\(s_\ell\in\mathbb R^{2d}\) is the component-wise empirical standard deviation of the clean
snapshots \(\{z^{(m)}_\ell\}_{m=1}^{1000}\). The scalar \(\sigma\) controls the relative noise
level, and we use \(\sigma\in\{0,10^{-4},10^{-3},10^{-2}\}\).

\noindent\textbf{Train-validation-test split.}
The training, validation, and test sets are generated independently. The training set consists of one-step snapshot pairs $(z_0,z_1,h)$, where $z_0$ is sampled as an initial condition at time $t=0$, $h>0$ is the prescribed time increment, and $z_1$ is the corresponding clean terminal state generated by the reference flow approximation over time $h$. The validation set has the same structure and is generated from the same distribution, with size $N_{\mathrm{val}}=N_{\mathrm{train}}/4$. The test set consists of held-out rollout trajectories sampled at times $t_n$ on $[0,10]$, where $z_n$ denotes the corresponding reference state at time $t_n$ for $n=0,\ldots,N_t$ and $t_n=n h_{\mathrm{test}}$ in the fixed-step tests. Each learned model is initialized from the same held-out initial conditions and then iterated with the prescribed test step sizes to evaluate long-horizon rollout accuracy.

\subsection{Benchmark models and evaluation metrics}\label{sec:models_metrics}

We compare CSympNet-ID with the following baselines:
\begin{itemize}
\item \textbf{SympNet}: the symplectic neural-network architecture introduced by Jin et al.~\cite{Jin2020SympNets}, used here as an exact symplectic map-learning reference model. In each benchmark, the SympNet baseline uses the same LA-SympNet or G-SympNet core architecture and comparable hyperparameters as the symplectic core \(\Psi_{\theta,h}\) in CSympNet-ID, but without the conformal scaling layers.
\item \textbf{MapBaseline}: an unconstrained multi-layer perceptron (MLP) that directly learns the one-step map without imposing any geometric structure.
\item \textbf{ODEBaseline}: a neural vector-field model in the Neural ODE framework~\cite{Chen2018NeuralODE}.
\item \textbf{DHNN} and \textbf{DHNN2}: dissipative Hamiltonian neural-network baselines based on the formulation in~\cite{SosanyaGreydanus2022DHNN}. DHNN2 denotes a restricted variant with an isotropic linear dissipative term.
\item \textbf{pHNN} and \textbf{pHNN2}: port-Hamiltonian neural-network baselines based on the formulation in~\cite{Desai2021PortHamiltonianNN}. pHNN2 denotes a restricted variant with an isotropic linear dissipative term.
\end{itemize}

The compared models are grouped into map-based and vector-field models. 
CSympNet-ID, SympNet, and MapBaseline directly approximate finite-time flow maps, whereas ODEBaseline, DHNN, DHNN2, pHNN, and pHNN2 learn vector fields that are numerically integrated at rollout. 
Map-based models are trained with the one-step loss in \eqref{eq:one_step_loss}, while vector-field models are trained with the midpoint loss
\begin{equation}\label{eq:midpoint_loss}
L_{\mathrm{midpoint}}(\theta)
=\frac{1}{N}\sum_{i=1}^{N}
\|
\frac{z_1^{(i)}-z_0^{(i)}}{h^{(i)}}
-\widehat f_{\theta}
\left(
\frac{z_1^{(i)}+z_0^{(i)}}{2}
\right)
\|_2^2.
\end{equation}
All models are trained on the same snapshot pairs and evaluated on the same rollout tasks. The benchmark protocol, training settings, and evaluation details are summarized in Table~\ref{tab:experiment_protocol}. 
For consistency across vector-field baselines, all vector-field models, including DHNN2 and pHNN2, are evaluated using the adaptive dopri5 solver with the tolerances reported in Table~\ref{tab:experiment_protocol}. We also tested conformal-symplectic rollout integrators for the restricted dissipative variants when applicable. 
These rollout evaluations were substantially more expensive and did not change the qualitative conclusions. 
Therefore, we report the dopri5 results in the main comparisons.

\begin{table}[H]
\centering
\caption{Benchmark protocol and reproducibility settings used in the reported experiments.}
\label{tab:experiment_protocol}
\small
\begin{tabularx}{\linewidth}{@{}p{0.07\linewidth}p{0.25\linewidth}p{0.08\linewidth}p{0.11\linewidth}X@{}}
\toprule
\multicolumn{5}{@{}l}{\textbf{Benchmark protocol}} \\
\midrule
System & Damped system & $d$ & Train pairs & Train/test step protocol \\
\midrule
S1 & harmonic oscillator & $1$ & $80$ 
& $h_{\mathrm{train}}\in[0.05,0.15]$, $h_{\mathrm{test}}=0.2$ \\
S2 & spring-mass chain & $10/100$ & $400$ 
& fixed $h_{\mathrm{train}}=0.1$, $h_{\mathrm{test}}=0.2$ \\
S3 & damped nonlinear cubic oscillator & $10/100$ & $400$ 
& fixed $h_{\mathrm{train}}=0.1$, $h_{\mathrm{test}}=0.2$ \\
\midrule
\multicolumn{5}{@{}l}{\textbf{Reproducibility settings}} \\
\midrule
\multicolumn{2}{@{}l}{Item} & \multicolumn{3}{l@{}}{Setting} \\
\midrule
\multicolumn{2}{@{}l}{Damping coefficient} 
& \multicolumn{3}{p{0.6\linewidth}@{}}{$\gamma=0.05$ for all benchmarks} \\
\multicolumn{2}{@{}l}{Reference solver} 
& \multicolumn{3}{p{0.6\linewidth}@{}}{conformal-symplectic splitting method  with St\"ormer-Verlet core} \\
\multicolumn{2}{@{}l}{Reference step} 
& \multicolumn{3}{p{0.6\linewidth}@{}}{$h_{\mathrm{ref}}=10^{-3}$} \\
\multicolumn{2}{@{}l}{Rollout interval} 
& \multicolumn{3}{p{0.6\linewidth}@{}}{$[0,10]$} \\
\multicolumn{2}{@{}l}{Observation noise} 
& \multicolumn{3}{p{0.6\linewidth}@{}}{$\sigma\in\{0,10^{-4},10^{-3},10^{-2}\}$} \\
\multicolumn{2}{@{}l}{Optimizer} 
& \multicolumn{3}{p{0.6\linewidth}@{}}{AdamW, full-batch training} \\
\multicolumn{2}{@{}l}{Initial learning rate / weight decay} 
& \multicolumn{3}{p{0.6\linewidth}@{}}{$10^{-3}$ / $5\times10^{-4}$} \\
\multicolumn{2}{@{}l}{Checkpoint selection} 
& \multicolumn{3}{p{0.6\linewidth}@{}}{lowest validation loss within a fixed epoch budget} \\
\multicolumn{2}{@{}l}{Loss function} 
& \multicolumn{3}{p{0.6\linewidth}@{}}{one-step loss  \eqref{eq:one_step_loss} for map-based models and midpoint loss \eqref{eq:midpoint_loss} for vector-field models} \\
\multicolumn{2}{@{}l}{Vector-field baseline solver} 
& \multicolumn{3}{p{0.6\linewidth}@{}}{adaptive dopri5, rtol $10^{-7}$, atol $10^{-9}$} \\
\multicolumn{2}{@{}l}{Evaluation data} 
& \multicolumn{3}{p{0.6\linewidth}@{}}{all reported results are evaluated on test trajectories generated with seed \(0\) over \(T\in[0,10]\)} \\
\multicolumn{2}{@{}l}{ Tables / trajectory figures} 
& \multicolumn{3}{p{0.6\linewidth}@{}}{models trained with a fixed representative seed $42$} \\
\multicolumn{2}{@{}l}{Bar plots} 
& \multicolumn{3}{p{0.6\linewidth}@{}}{models trained with seeds $\{10,20,30,40,50,60,70,80,90,100\}$; bars show the median and error bars show the interquartile range over the 10 seeds} \\
\bottomrule
\end{tabularx}
\end{table}

For \(M\) rollout trajectories sampled at times \(0=t_0<t_1<\cdots<t_{N_t}\) on \([0,10]\), 
with \(h_n:=t_n-t_{n-1}\), the reported trajectory metric is the long-horizon quadrature \(L^2\) rollout error
\[
\mathcal{E}_{L^2}
=\left(
\sum_{m=1}^{M}
\sum_{n=1}^{N_t}
\|
z^{(m)}(t_n)-\widehat z^{(m)}(t_n)\|_2^2
h_n
\right)^{1/2}.
\]
This corresponds to a right-rectangle quadrature approximation of the time-integrated squared rollout error.

We also evaluate all methods using the same conformal-symplectic diagnostics. For a learned time-$h$ map $\widehat\Phi_h$, the conformal-law residual with respect to the target factor $\lambda^\star(h)=\exp(-\gamma h)$ is defined as
\begin{equation}\label{eq:residual_metric}
R(z)=\|
\bigl(\D\widehat\Phi_h(z)\bigr)^\top
J
\D\widehat\Phi_h(z)-\lambda^\star(h)J
\|_F .
\end{equation}
We report both $\mathbb{E}[R]$ and $\max R$ over the test points. To obtain a scalar diagnostic conformal factor for any learned map, we use the Frobenius projection
\begin{equation}\label{eq:lambda_projection_metric}
\widehat\lambda(z)
=
\frac{
\left\langle
\bigl(\D\widehat\Phi_h(z)\bigr)^\top
J\D\widehat\Phi_h(z),
J
\right\rangle_F
}{
\|J\|_F^2
}.
\end{equation}
The corresponding conformal-factor residual is reported as
$\mathbb{E}\bigl[|\widehat\lambda-\lambda^\star(h)|\bigr]$.

Equations~\eqref{eq:residual_metric} and \eqref{eq:lambda_projection_metric} are applied uniformly to CSympNet-ID and to all baselines. For CSympNet-ID, the learned damping rate \(\hat\gamma\) gives the explicit structural factor \(\lambda(h)=\exp(-\hat\gamma h)\), and the projected factor in \eqref{eq:lambda_projection_metric} agrees with it up to machine accuracy. For baselines that do not enforce a scalar conformal factor, $\widehat\lambda(z)$ is used only as a post-hoc diagnostic and may vary with $z$. For vector-field baselines, the learned vector field is first integrated over the same step size $h$ to obtain the induced time-$h$ map $\widehat\Phi_h$, after which the same Jacobian-based diagnostics are applied. In all Jacobian-based diagnostics, $D\widehat{\Phi}_h(z)$ is computed by automatic differentiation in PyTorch. The residuals and rollout errors are compared only within the same benchmark.

\subsection{Model performance}\label{sec:numerical_experiments}

\subsubsection{Noise-free long-horizon prediction and structure diagnostics}\label{sec:exp_I}

Table~\ref{tab:l2err} reports the long-horizon
quadrature $L^2$ errors from a representative single run with training seed 42. Multi-seed variability is evaluated separately in the noise-robustness, sample-efficiency and held-out-step sweep experiments. CSympNet-ID gives the smallest error among the compared models on all three main benchmarks while using nearly the fewest parameters among the tested models.  Figure~\ref{fig:trajectory_main} shows representative rollouts, including the phase portrait of the first coordinate pair, the first-coordinate trajectory, accumulated rollout error, and energy evolution.
In these rollouts, CSympNet-ID tracks the reference phase portraits and first-coordinate trajectories most closely across S1-S3 and keeps the accumulated rollout error smallest throughout the simulation window.
SympNet shows systematic deviations because it enforces the conservative factor $\lambda=1$, while MapBaseline can drift away from the correct phase-space region despite its larger parameter count.
The ODEBaseline, DHNN/DHNN2, and pHNN/pHNN2 baselines generally improve over the unstructured map baseline and often reproduce the qualitative dissipative trend, but their accumulated errors remain larger than those of CSympNet-ID.
The energy panels show the same pattern: the dissipative Hamiltonian and port-Hamiltonian baselines capture energy decay more reasonably than SympNet, yet CSympNet-ID most closely matches the reference decay because it enforces the target discrete conformal contraction law directly at the map level.
\begin{table}[H]
\centering
\caption{Long-horizon
quadrature $L^2$ error at $\sigma=0$.}
\label{tab:l2err}
\small
\begin{tabular}{lrrrrrr}
\toprule
\multirow{2}{*}{Model} & \multicolumn{2}{c}{S1} & \multicolumn{2}{c}{S2, $d=10$} & \multicolumn{2}{c}{S3, $d=10$} \\
\cmidrule(lr){2-3}\cmidrule(lr){4-5}\cmidrule(lr){6-7}
& Params & $L^2$ error & Params & $L^2$ error & Params & $L^2$ error \\
\midrule
CSympNet-ID(ours) & 115  & \textbf{2.175e-04} & 4111  & \textbf{2.513e-02} & 4201  & \textbf{4.096e-02} \\
SympNet           & 114  &         2.108e-01  & 4110  &         1.192e+00  & 4200  &         1.245e+00  \\
MapBaseline       & 1250 &         1.611e-01  & 6868  &         2.274e+01  & 6868  &         3.472e+01  \\
ODEBaseline       & 1218 &         6.321e-03  & 6804  &         2.605e-01  & 6804  &         1.764e-01  \\
DHNN              & 2374 &         5.891e-03  & 11538 &         4.664e-01  & 11538 &         7.159e-01  \\
DHNN2             & 1190 &         8.686e-03  & 5970  &         2.453e-01  & 5970  &         3.973e-01  \\
pHNN              & 1193 &         9.777e-03  & 6369  &         1.791e-01  & 6369  &         3.699e-01  \\
pHNN2             & 1190 &         5.248e-03  & 5970  &         2.305e-01  & 5970  &         4.992e-01  \\
\bottomrule
\end{tabular}
\end{table}

\begin{table}[H]
\centering
\caption{Target conformal-law residuals and factor mismatch at $\sigma=0$.}
\label{tab:diagnostics}
\scriptsize
\resizebox{\linewidth}{!}{
\begin{tabular}{lrrrrrrrrr}
\toprule
\multirow{2}{*}{Model} & \multicolumn{3}{c}{$\mathbb{E}[R]$} & \multicolumn{3}{c}{$\max R$} & \multicolumn{3}{c}{$\mathbb{E}[|\hat\lambda-\lambda^\star|]$} \\
\cmidrule(lr){2-4}\cmidrule(lr){5-7}\cmidrule(lr){8-10}
& S1 & S2 & S3 & S1 & S2 & S3 & S1 & S2 & S3 \\
\midrule
CSympNet-ID(ours) & \textbf{4.6e-07} & \textbf{2.7e-06} & \textbf{2.6e-05} & \textbf{8.4e-07} & \textbf{3.0e-06} & \textbf{2.6e-05} & \textbf{3.9e-07} & \textbf{5.7e-07} & \textbf{5.8e-06} \\
SympNet           &         1.4e-02  &         4.4e-02  &         4.4e-02  &         1.4e-02  &         4.4e-02  &         4.4e-02  &         1.0e-02  &         1.0e-02  &         1.0e-02  \\
MapBaseline       &         1.1e-02  &         3.6e-01  &         3.6e-01  &         1.3e-02  &         3.8e-01  &         3.7e-01  &         7.7e-03  &         7.8e-02  &         7.8e-02  \\
ODEBaseline       &         6.2e-05  &         1.2e-02  &         1.1e-02  &         1.8e-04  &         1.6e-02  &         1.4e-02  &         4.4e-05  &         1.8e-04  &         1.7e-04  \\
DHNN              &         1.1e-04  &         2.3e-02  &         2.4e-02  &         1.8e-04  &         3.3e-02  &         3.0e-02  &         7.8e-05  &         6.0e-04  &         1.3e-03  \\
DHNN2             &         2.1e-05  &         1.1e-02  &         1.7e-02  &         4.4e-05  &         1.3e-02  &         2.3e-02  &         1.5e-05  &         3.9e-04  &         7.5e-04  \\
pHNN              &         6.1e-05  &         3.3e-03  &         2.3e-03  &         8.0e-05  &         3.4e-03  &         2.5e-03  &         4.3e-05  &         2.2e-04  &         3.8e-05  \\
pHNN2             &         5.8e-05  &         2.0e-03  &         2.6e-03  &         8.2e-05  &         2.3e-03  &         2.8e-03  &         4.1e-05  &         3.9e-04  &         5.2e-04  \\
\bottomrule
\end{tabular}}
\end{table}

\begin{figure}[H]
\centering
\begin{subfigure}[t]{0.99\linewidth}
  \centering
  \includegraphics[width=\linewidth]{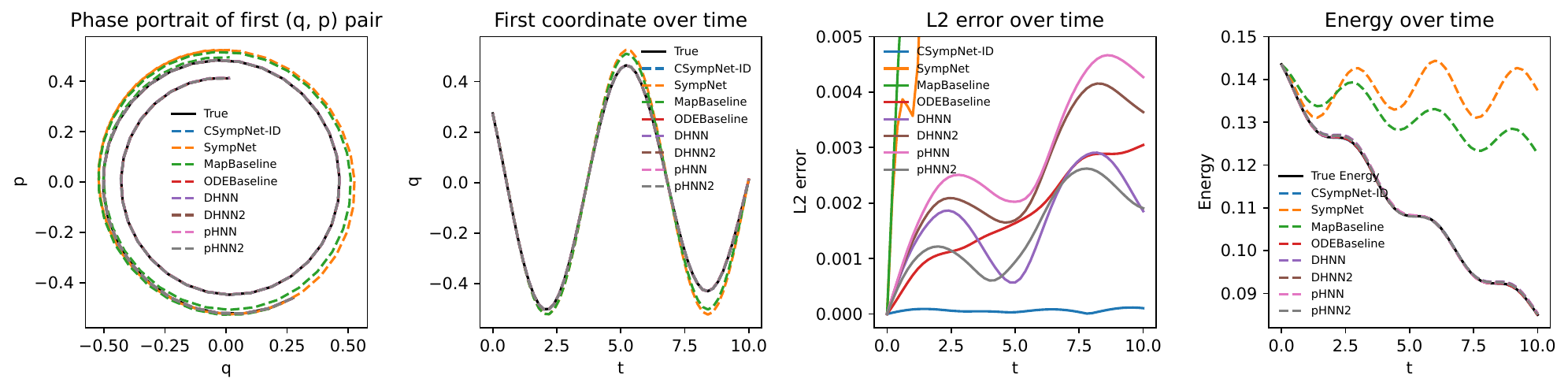}
  \caption{S1: damped harmonic oscillator.}
\end{subfigure}

\begin{subfigure}[t]{0.99\linewidth}
  \centering
  \includegraphics[width=\linewidth]{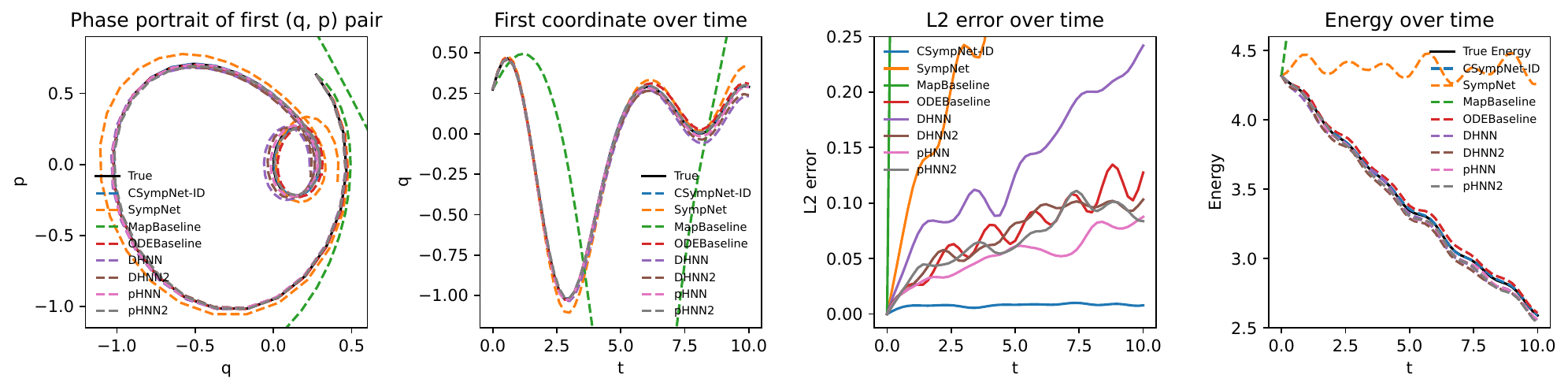}
  \caption{S2: damped spring-mass chain ($d=10$).}
\end{subfigure}

\begin{subfigure}[t]{0.99\linewidth}
  \centering
  \includegraphics[width=\linewidth]{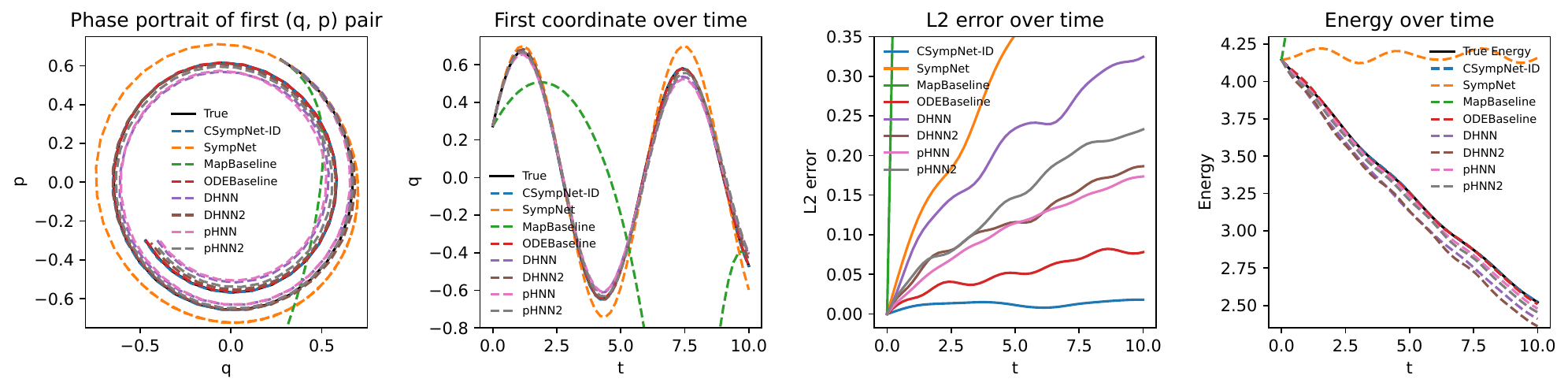}
  \caption{S3: damped nonlinear cubic oscillator ($d=10$).}
\end{subfigure}
\caption{Representative long-horizon rollouts for the S1--S3 benchmarks at $\sigma=0$. Each panel shows the phase portrait of the first coordinate pair, the first coordinate over time, the accumulated rollout error, and the energy evolution.}
\label{fig:trajectory_main}
\end{figure}

\begin{table}[H]
\centering
\caption{Damping-rate estimation by CSympNet-ID at $\sigma=0$.}
\label{tab:gamma_id}
\small
\begin{tabular}{lrrr}
\toprule
System & $\gamma$ & $\hat\gamma$ & $|\gamma-\hat\gamma|$ \\
\midrule
S1         & 5.0000e-02 & 4.9998e-02 & 1.7911e-06 \\
S2, $d=10$ & 5.0000e-02 & 5.0003e-02 & 3.1821e-06 \\
S3, $d=10$ & 5.0000e-02 & 4.9971e-02 & 2.9355e-05 \\
\bottomrule
\end{tabular}
\end{table}

The structure diagnostics in Tables~\ref{tab:diagnostics} and~\ref{tab:gamma_id} show that the improvement is not only a trajectory-fitting effect. 
CSympNet-ID gives the smallest target conformal-law residuals and factor mismatches across all three benchmarks, with factor errors at the \(10^{-7}\)--\(10^{-6}\) level and damping-rate errors below \(3.0\times10^{-5}\). 
By contrast, SympNet has an irreducible factor mismatch of order \(10^{-2}\) because it enforces \(\lambda=1\), and MapBaseline does not reliably recover the target contraction law despite being an unconstrained map learner.
The ODEBaseline, DHNN/DHNN2, and pHNN/pHNN2 baselines reduce the mismatch relative to SympNet and MapBaseline in several cases, but their target-law residuals and factor errors remain consistently larger than those of CSympNet-ID.

\subsubsection{Robustness under noisy observations}\label{sec:exp_noise}

Figures~\ref{fig:error_noise_all} and~\ref{fig:lambda_noise_all} summarize the noise-robustness results over $10$ random seeds. These seeds are $\{10,20,30,40,50,60,70,80,90,100\}$. 
Figure~\ref{fig:error_noise_all} shows that the advantage of CSympNet-ID becomes more pronounced on the more challenging benchmarks S2 and S3 than on the low-dimensional oscillator S1. 
As the noise level increases, CSympNet-ID exhibits a relatively mild growth of rollout error, whereas MapBaseline and several generic baselines degrade more rapidly, especially on the higher-dimensional spring-mass chain and the nonlinear cubic benchmark. 
The structured dissipative baselines, including ODEBaseline, DHNN/DHNN2, and pHNN/pHNN2, are generally more stable than the unstructured map baseline, but they still accumulate larger errors than CSympNet-ID in most noisy cases.

\begin{figure}[H]
\centering
\begin{subfigure}[t]{0.65\linewidth}
  \centering
  \includegraphics[width=\linewidth]{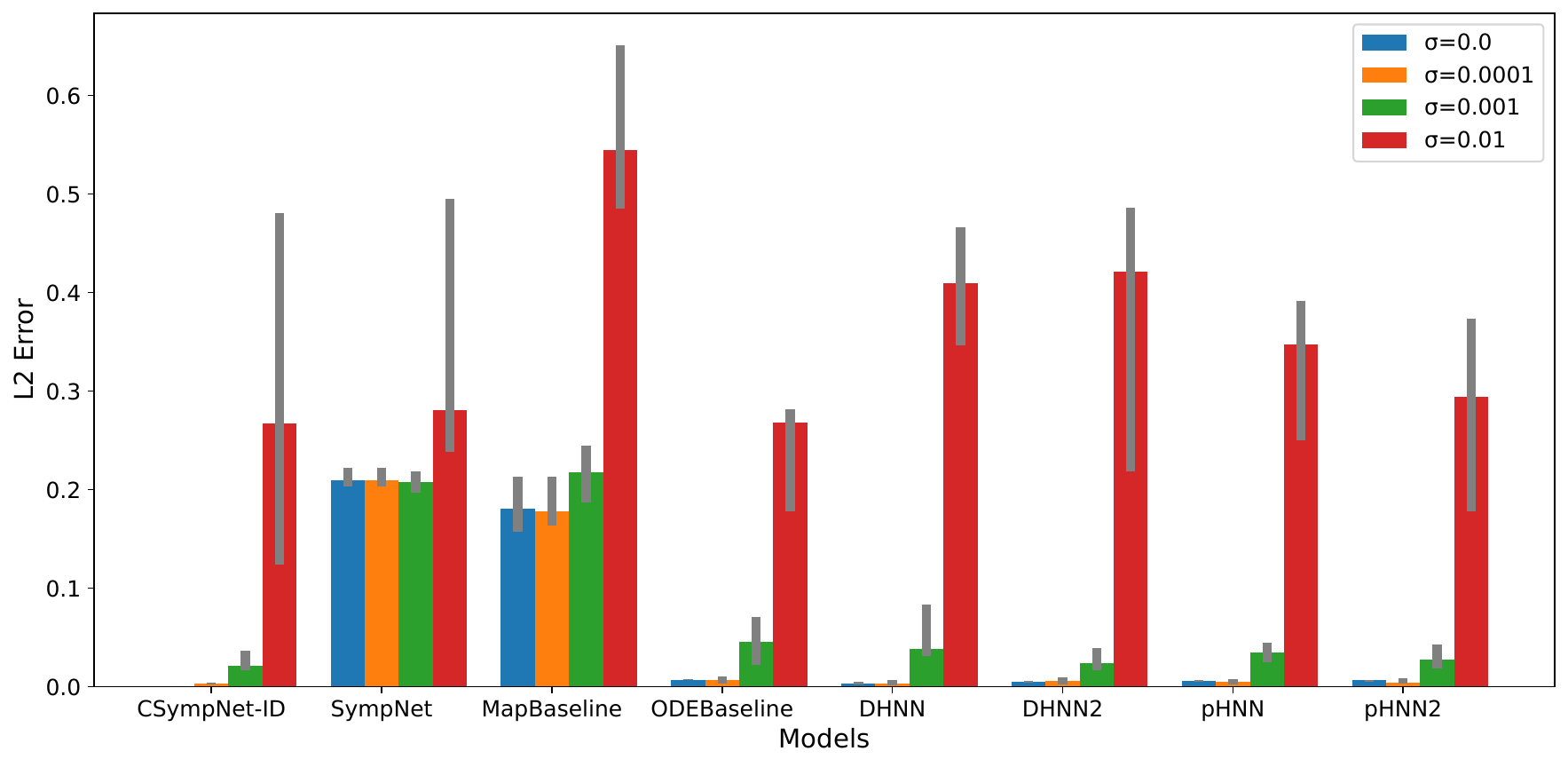}
  \caption{S1: damped harmonic oscillator.}
\end{subfigure}
\begin{subfigure}[t]{0.65\linewidth}
  \centering
  \includegraphics[width=\linewidth]{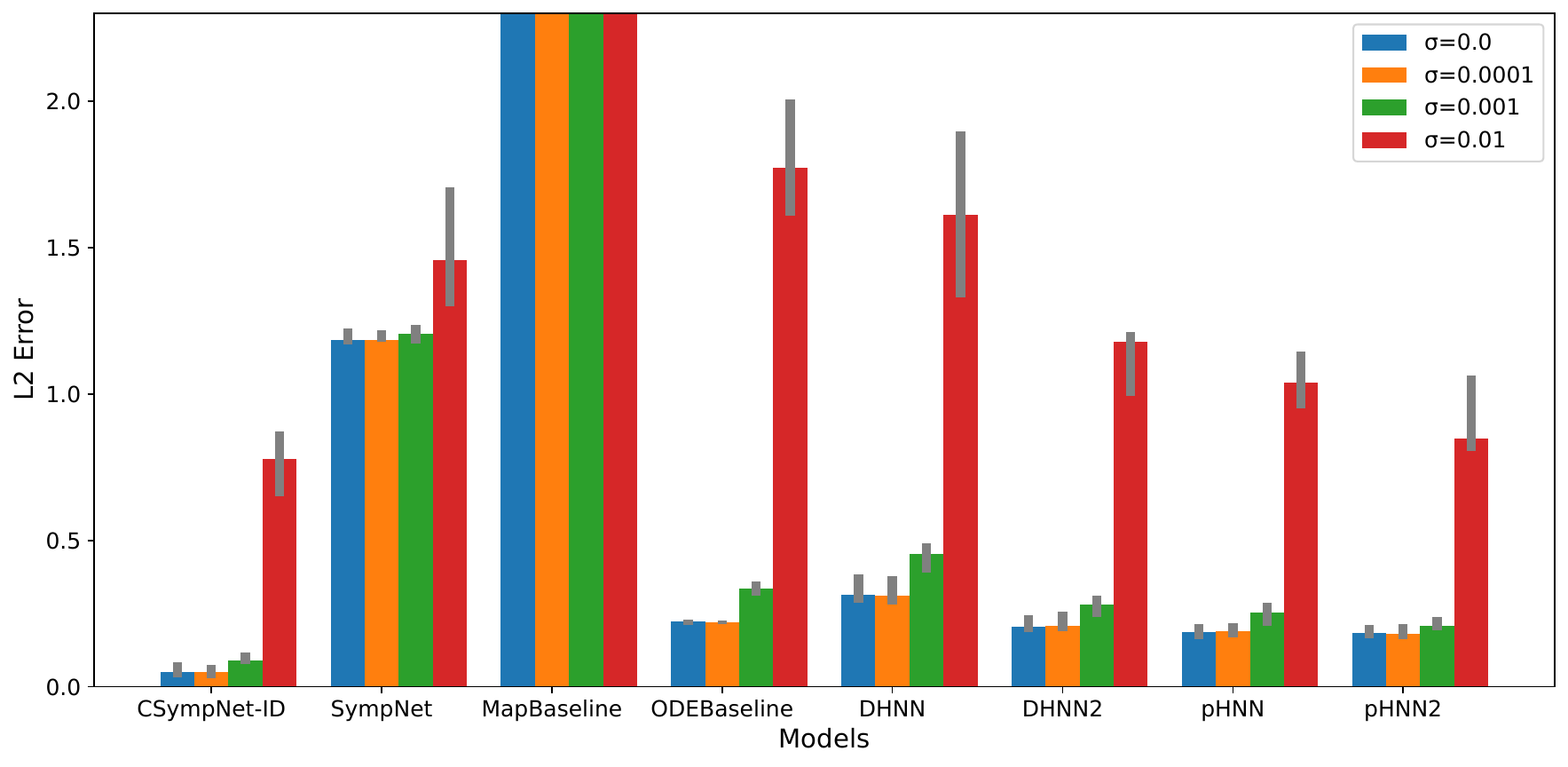}
  \caption{S2: damped spring-mass chain.}
\end{subfigure}
\begin{subfigure}[t]{0.65\linewidth}
  \centering  \includegraphics[width=\linewidth]{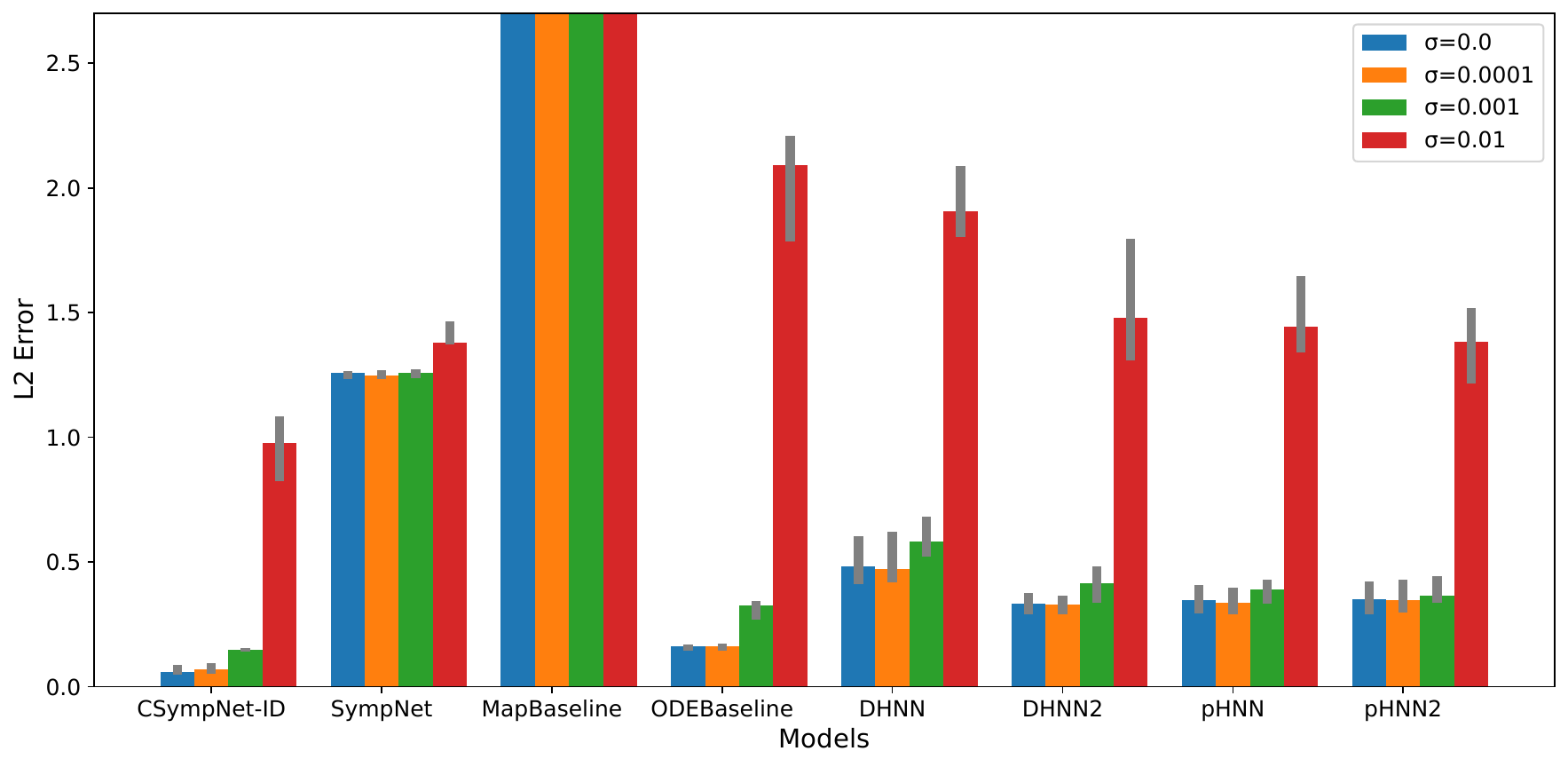}
  \caption{S3: damped nonlinear cubic oscillator.}
\end{subfigure}
\caption{Long-horizon rollout $L^2$ error under increasing observation noise. Bars show the median over 10 random seeds, and error bars show the interquartile range. For readability, MapBaseline bars/error bars exceeding the plotted range are clipped. This clipping affects only the display, not the reported statistics.}
\label{fig:error_noise_all}
\end{figure}

Figure~\ref{fig:lambda_noise_all} shows a complementary structural trend. CSympNet-ID stays close to the target contraction factor $\lambda^\star$ in most tested settings and exhibits relatively small variation of the learned factor as the noise level increases. One exception occurs for S3 at the largest noise level, where the median projected factor of ODEBaseline is closer to $\lambda^\star$. 
Nevertheless, across the three benchmarks, CSympNet-ID remains among the most stable methods in recovering the discrete contraction factor. SympNet is constrained to $\lambda=1$ and therefore cannot represent the damped contraction, while the other dissipative baselines may recover the factor reasonably in some cases but can show larger variance or bias under stronger noise. These results suggest that the proposed conformal-symplectic map structure improves the stability of the learned discrete contraction law in most tested cases, although very noisy observations can reduce or remove this advantage for individual benchmarks.

\begin{figure}[H]
\centering
\begin{subfigure}[t]{0.65\linewidth}
  \centering
  \includegraphics[width=\linewidth]{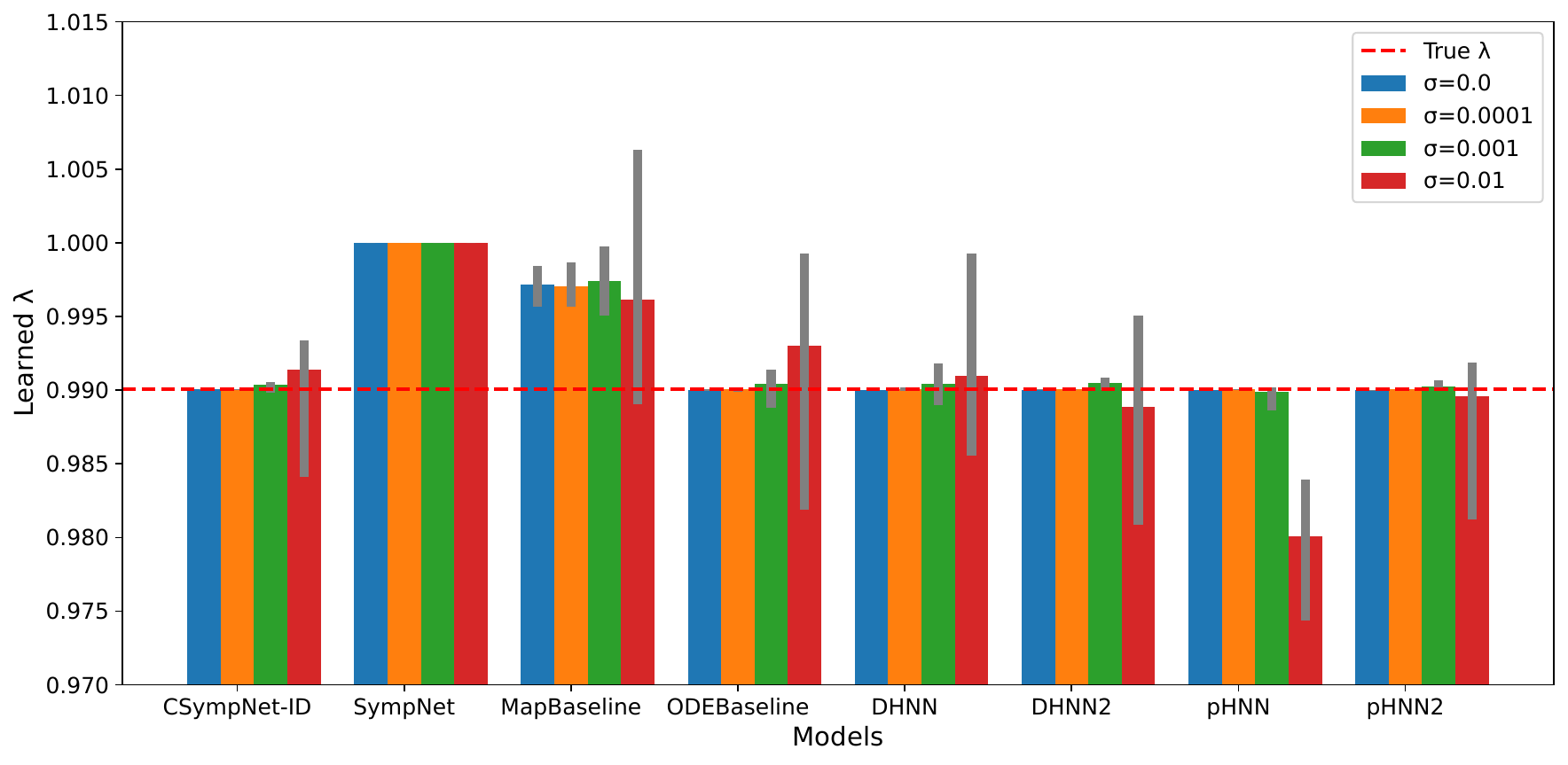}
  \caption{S1: damped harmonic oscillator.}
\end{subfigure}
\begin{subfigure}[t]{0.65\linewidth}
  \centering
  \includegraphics[width=\linewidth]{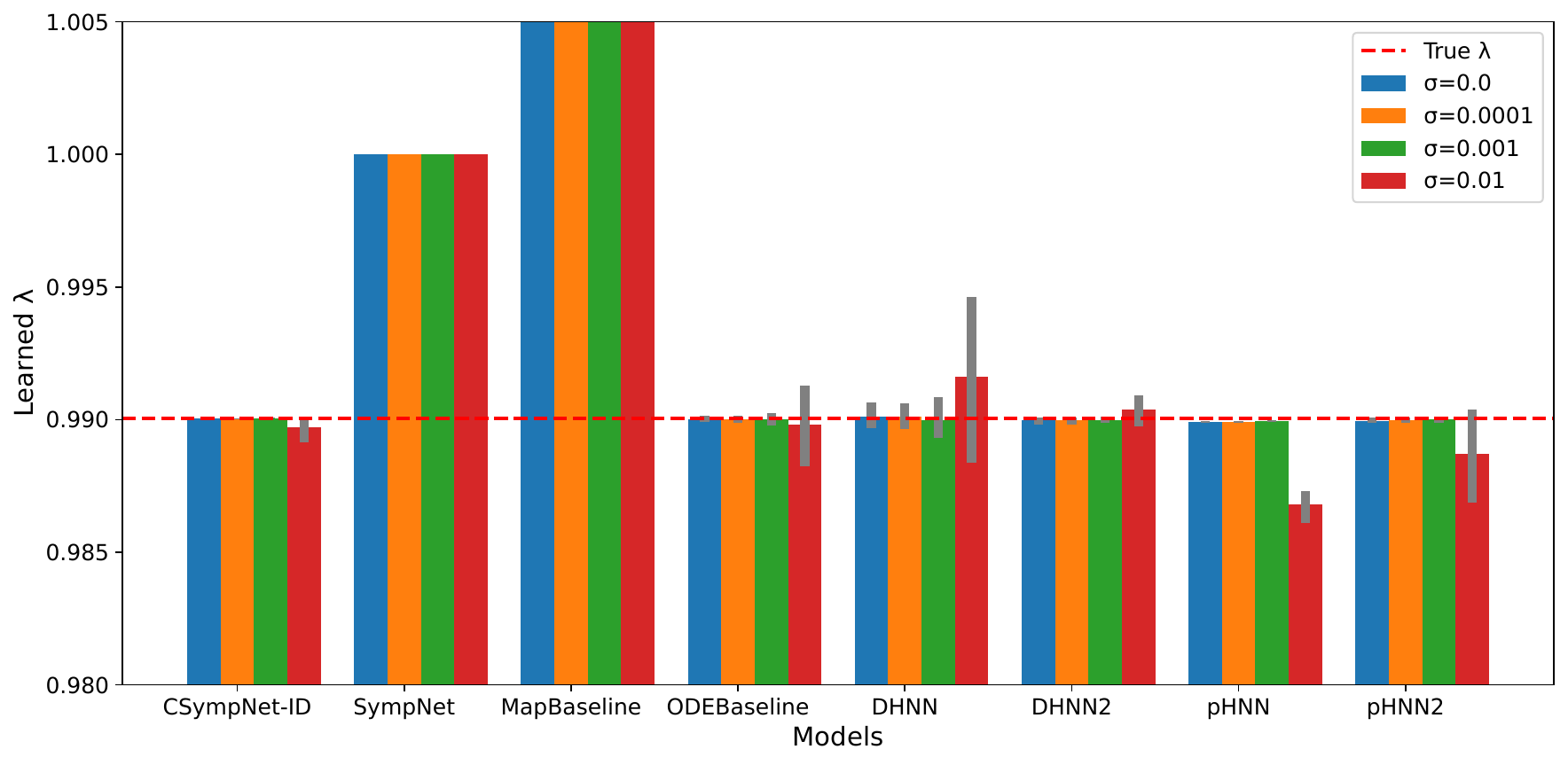}
\caption{S2: damped spring-mass chain.}
\end{subfigure}
\begin{subfigure}[t]{0.65\linewidth}
  \centering
  \includegraphics[width=\linewidth]{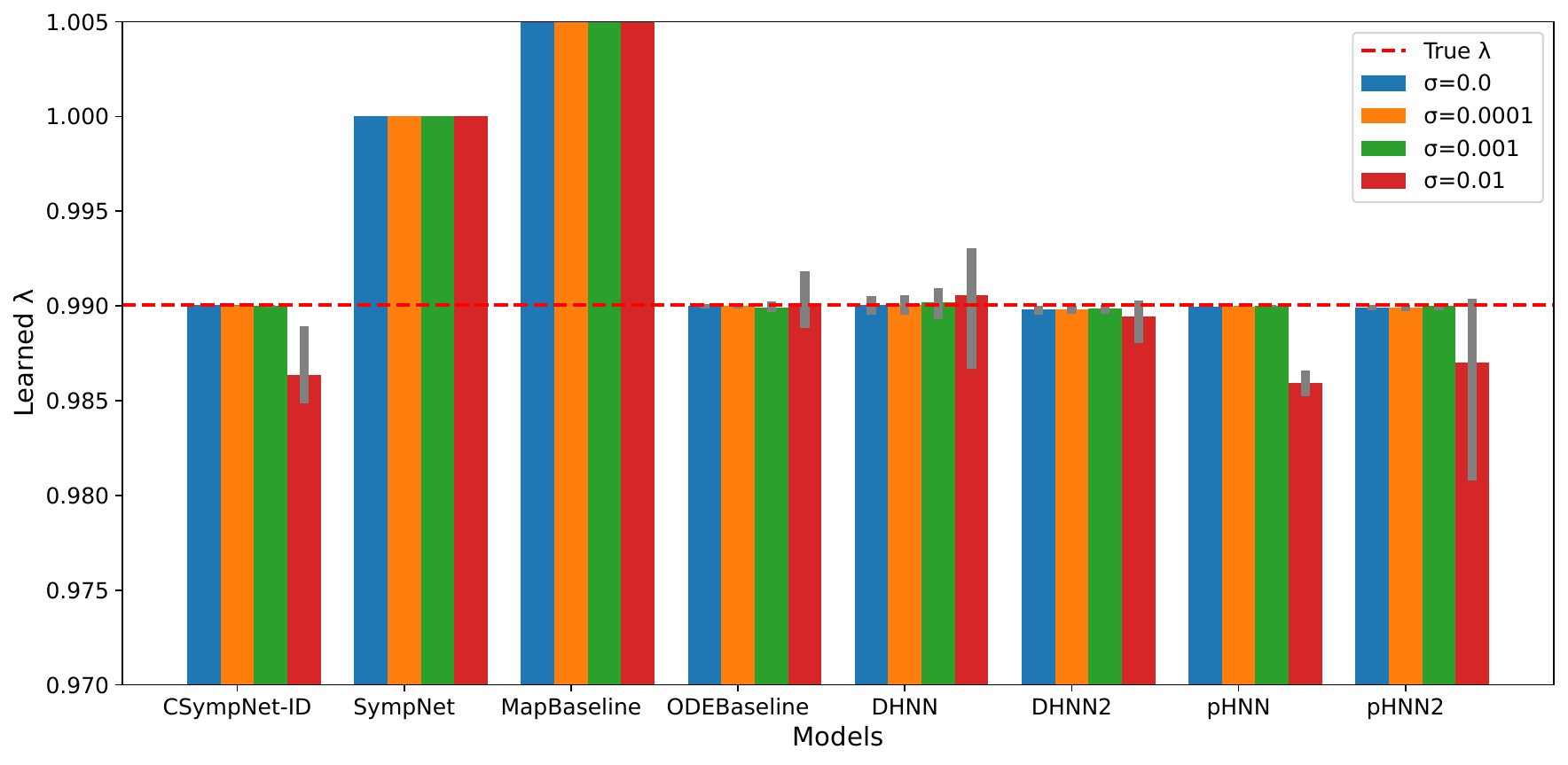}
  \caption{S3: damped nonlinear cubic oscillator.}
\end{subfigure}
\caption{Learned conformal factor under increasing observation noise. The dashed horizontal line marks the true factor $\lambda^\star$. Bars show the median over $10$ random seeds, and error bars show the interquartile range. For readability, the vertical range is restricted around \(\lambda^\star\), and MapBaseline bars/error bars outside this range are clipped.}
\label{fig:lambda_noise_all}
\end{figure}

\subsubsection{High-dimensional extension}\label{sec:exp_highdim}

We further test S2 and S3 with $d=100$, so that the corresponding phase-space dimension is $2d=200$. Table~\ref{tab:highdim_results} reports high-dimensional results from a fixed representative run with training seed~$42$, evaluated on test trajectories generated with seed~$0$. Figure~\ref{fig:highdim_traj} shows representative rollouts. Within each \(d=100\) benchmark, CSympNet-ID gives the smallest long-horizon
quadrature $L^2$ error, target conformal-law residual, and factor mismatch among the compared models. 
MapBaseline and ODEBaseline exhibit clear long-horizon degradation in this regime. Although SympNet gives a visually reasonable projected trajectory in the S3 representative
rollout, its fixed choice \(\lambda=1\) leads to a systematic contraction-factor mismatch.
The dissipative Hamiltonian and port-Hamiltonian baselines behave more favorably than the
generic baselines but still yield larger errors and less accurate contraction-factor recovery
than CSympNet-ID.

\begin{table}[H]
\centering
\caption{High-dimensional extension results for S2 and S3 with $d=100$.}
\label{tab:highdim_results}
\scriptsize
\resizebox{\linewidth}{!}{
\begin{tabular}{lrrrrrrrr}
\toprule
\multirow{2}{*}{Model} & \multicolumn{4}{c}{S2, $d=100$} & \multicolumn{4}{c}{S3, $d=100$} \\
\cmidrule(lr){2-5}\cmidrule(lr){6-9}
& Params & $L^2$ error & $\mathbb{E}[R]$ & $\mathbb{E}[|\hat\lambda-\lambda^\star|]$ & Params & $L^2$ error & $\mathbb{E}[R]$ & $\mathbb{E}[|\hat\lambda-\lambda^\star|]$ \\
\midrule
CSympNet-ID(ours) & 161101 & \textbf{2.609e+00} & \textbf{2.9e-05} & \textbf{2.0e-06} & 161101 & \textbf{2.632e+00} & \textbf{3.8e-04} & \textbf{2.7e-05} \\
SympNet           & 161100 &         7.067e+00  &         1.4e-01  &         1.0e-02  & 161100 &         4.642e+00  &         1.4e-01  &         1.0e-02  \\
MapBaseline       & 168904 &         2.391e+09  &         1.2e+01  &         3.9e-01  & 168904 &         1.336e+09  &         9.9e+00  &         3.7e-01  \\
ODEBaseline       & 168648 &         6.150e+01  &         2.2e+00  &         5.1e-03  & 168648 &         4.460e+01  &         1.4e+00  &         2.7e-03  \\
DHNN              & 275010 &         3.773e+01  &         8.2e-01  &         3.9e-03  & 275010 &         1.345e+01  &         4.1e-01  &         1.1e-03  \\
DHNN2             & 157506 &         2.293e+01  &         1.0e+00  &         2.4e-02  & 157506 &         1.440e+01  &         9.5e-01  &         2.9e-02  \\
pHNN              & 197505 &         1.102e+01  &         2.0e-01  &         4.8e-03  & 197505 &         1.017e+01  &         1.1e-01  &         2.1e-03  \\
pHNN2             & 157506 &         1.808e+01  &         2.0e-01  &         1.3e-02  & 157506 &         3.172e+01  &         8.3e-01  &         4.8e-02  \\
\bottomrule
\end{tabular}}
\end{table}

\begin{figure}[H]
\centering
\begin{subfigure}[t]{0.95\linewidth}
  \centering
  \includegraphics[width=\linewidth]{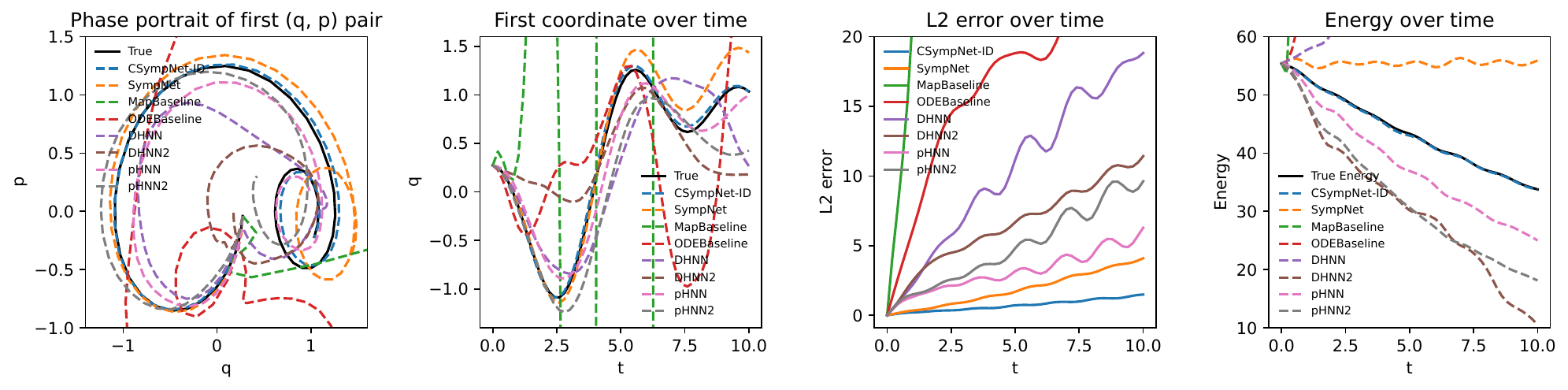}
  \caption{S2 high-dimensional extension: damped spring-mass chain ($d=100$).}
\end{subfigure}

\begin{subfigure}[t]{0.95\linewidth}
  \centering
  \includegraphics[width=\linewidth]{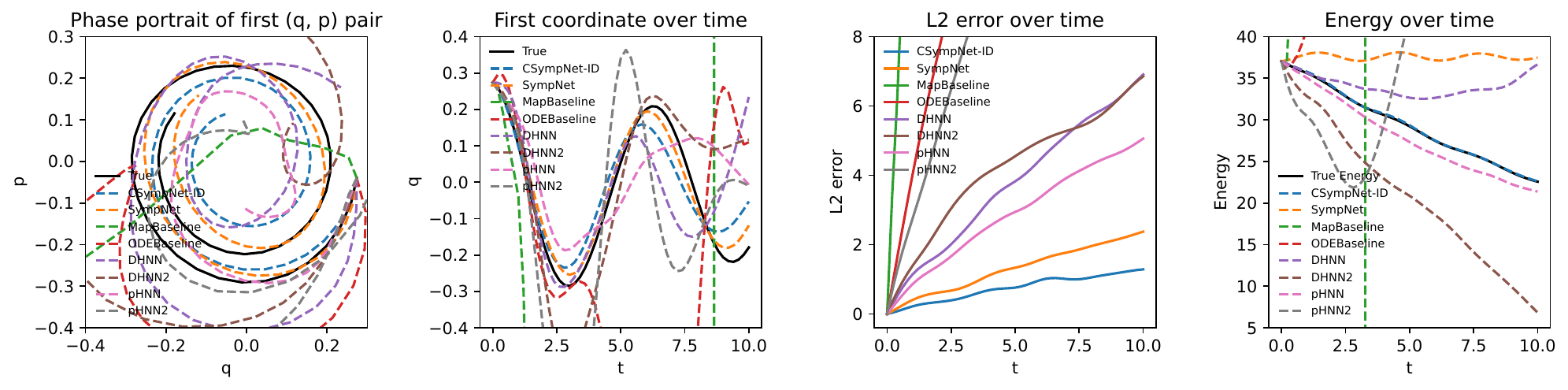}
  \caption{S3 high-dimensional extension: damped nonlinear cubic oscillator ($d=100$).}
\end{subfigure}
\caption{Representative long-horizon  rollouts for the high-dimensional benchmarks with $\sigma=0$.}
\label{fig:highdim_traj}
\end{figure}

\subsection{Discussion}\label{sec:exp_discussion}

The experiments support the main design principle of CSympNet-ID: for scalar linearly damped Hamiltonian systems, the learned discrete-time map should reproduce the conformal contraction law at the map level, rather than only encode a generic Hamiltonian, symplectic, or dissipative bias. 
The comparison with SympNet isolates this point most directly. 
SympNet uses essentially the same symplectic-network mechanism as the proposed model, but fixes the conformal factor to $\lambda=1$. 
It therefore preserves oriented area instead of reproducing the damping-induced contraction, which leads to systematic errors in long-horizon rollouts, energy decay, and contraction-factor diagnostics.

The comparison with MapBaseline shows the limitation of unconstrained one-step map learning. 
Although an unrestricted map can fit short-time snapshot pairs in simple low-dimensional settings, it does not encode the target dissipative geometry. 
This becomes particularly problematic in the higher-dimensional S2 and S3 benchmarks, where MapBaseline exhibits unstable rollouts and large target conformal-law residuals. 
These results indicate that accurate local fitting alone is not sufficient for recovering the geometric contraction law governing the damped dynamics.

The ODEBaseline, DHNN/DHNN2, and pHNN/pHNN2 comparisons show a more subtle distinction. 
These models incorporate continuous-time vector-field structure, and the dissipative Hamiltonian and port-Hamiltonian variants are generally more stable than the unrestricted map baseline. 
However, they still learn a continuous-time model that must be converted into a discrete one-step map through numerical integration at rollout. 
As a result, the learned discrete map is not constrained to satisfy the exact target conformal-symplectic law. 
This distinction is important for the restricted variants DHNN2 and pHNN2: they include an isotropic linear dissipative prior, which is closer to the true scalar damping mechanism than a fully generic dissipative component, but their target-law residuals, factor mismatches, and damping-rate recovery still remain less accurate than those of CSympNet-ID in the reported tests. 
Thus, imposing a compatible continuous-time damping prior improves over unstructured learning, but does not replace enforcing the correct discrete conformal contraction factor directly. For the restricted dissipative variants DHNN2 and pHNN2, we also tested a conformal-symplectic rollout integrator on the noise-free S1--S3 benchmarks. 
It improved their conformal-factor diagnostics but still remained less accurate than CSympNet-ID, while the long-horizon $L^2$ errors changed little and the computational cost was substantially higher; therefore, we report the adaptive dopri5 results in the main comparisons.

The high-dimensional experiments further emphasize this difference. 
In the \(d=100\) tests, generic baselines exhibit substantial long-horizon degradation, whereas the dissipative Hamiltonian and port-Hamiltonian baselines show less severe rollout degradation but still produce larger long-horizon quadrature errors and less accurate contraction-factor recovery than CSympNet-ID. 
Among the baselines other than CSympNet-ID, SympNet gives relatively reasonable rollout behavior in the S3 representative trajectory, but its fixed choice \(\lambda=1\) prevents recovery of the damped conformal factor. 
Repeating the noise-free \(d=100\) tests over 10 training seeds gives the same trend in rollout error and conformal-factor recovery, further supporting the overall advantage of CSympNet-ID in this setting. 
Because the reported \(L^2\) error and target-law residuals are unnormalized raw quantities, these comparisons are interpreted within each fixed benchmark dimension.

The noise-robustness experiments in Section~\ref{sec:exp_noise}, together with sample-efficiency and held-out-step experiments in Appendices~\ref{app:sample_efficiency}--\ref{app:heldout_step}, are consistent with the same conclusion. 
They suggest that the explicit conformal-symplectic parameterization improves rollout stability under limited data, noisy observations, and shifted test step sizes. 
Under stronger observation noise, some vector-field dissipative baselines can become closer in rollout error, but CSympNet-ID remains more consistent  in the structure diagnostics among the compared models and exhibits less variation in the learned damping factor as reported in  Figure~\ref{fig:lambda_noise_all}. We further repeated representative tests with alternative reference solutions to rule out artifacts from the second-order conformal-symplectic splitting used as the main data generator. We used exact matrix-exponential solutions for the linear S1 and S2 systems and a high-accuracy adaptive Runge-Kutta solver with tight tolerances for the nonlinear S3 system, and the main qualitative conclusions on rollout accuracy and damping-rate recovery remained unchanged.

Overall, the numerical evidence indicates that exact discrete conformal symplecticity is a useful inductive bias when the dominant dissipative mechanism is a scalar linear damping rate. 
The results should not be read as a claim for arbitrary anisotropic or state-dependent dissipation; rather, they show that in the scalar-factor setting studied here, explicitly learning and enforcing the one-step conformal factor yields a favorable combination of long-horizon accuracy, contraction-law fidelity, scalar-rate estimation, and rollout stability.

\FloatBarrier

\section{Conclusion}\label{sec:conclusion}

We have proposed CSympNet-ID, a conformal-symplectic map-learning framework for scalar linearly damped Hamiltonian systems. 
The method learns a discrete one-step map directly from snapshot pairs while enforcing the exact  conformal-symplectic law by construction. 
By sandwiching an exact symplectic neural core between explicit diagonal scaling layers, CSympNet-ID preserves discrete conformal symplecticity for all parameter values and provides an interpretable scalar-rate estimate of the damping factor.

The theoretical analysis shows that conformal-symplectic maps can be reduced to symplectic maps through a simple scaling conjugacy. This yields a pointwise-in-step density result for the proposed architecture by transferring existing SympNet approximation guarantees to the conformal-symplectic setting.

The numerical experiments provide consistent empirical support for this design. On the irregular-step damped oscillator S1, the damped spring-mass chain S2, and the damped nonlinear cubic oscillator S3, CSympNet-ID achieves the smallest rollout errors among the compared models in the representative noise-free runs. 
The structure diagnostics further indicate that these gains are accompanied by more accurate recovery of the target conformal contraction law and scalar damping rate in the representative runs. The noise-robustness, sample-efficiency, held-out-step, and high-dimensional extension experiments further support the benefit of enforcing the discrete conformal contraction law, especially for long-horizon prediction and stable damping-factor recovery.

Overall, the results support exact discrete conformal symplecticity as a useful inductive bias for learning scalar linearly damped Hamiltonian dynamics from snapshot data.  Future work will consider state-dependent conformal factors, anisotropic dissipative extensions, non-canonical Poisson generalizations, and operator-learning extensions for PDE-scale dissipative dynamics.

\section*{Data and code availability}
The code will be made public after the paper is published.

 \section*{Declaration of competing interest}
The authors declare that they have no known competing financial interests or personal relationships 
that could have appeared to influence the work reported in this paper.

\section*{Acknowledgements}
This work is supported by the National Natural Science Foundation of China (Grant Nos. 12501580, 92470119, 12301677).

\appendix
\section{Proofs and Diagnostics}\label{sec:appendix}

\subsection{Proof of Proposition~\ref{prop:flow_csp}}\label{prf:proof_flow_csp}
\begin{proof}
\hypertarget{prf:proof_flow_csp}{}
Let $z(t)=\Phi_t(z_0)$ and set $M(t):=\mathrm{D}\Phi_t(z_0)$. The variational equation along the flow reads
\[
\dot M(t)=A(t)M(t),\qquad A(t)=J\nabla^2 H(z(t))-G,
\]
where $G=\diag(0,\gamma I)$. Differentiating $M(t)^\top J M(t)$ gives
\[
\frac{d}{dt}\bigl(M(t)^\top J M(t)\bigr)=M(t)^\top\bigl(A(t)^\top J+J A(t)\bigr)M(t).
\]
Since $\nabla^2H$ is symmetric, one has
\[
\bigl(J\nabla^2 H\bigr)^\top J+J\bigl(J\nabla^2 H\bigr)=0.
\]
Moreover,
\[
(-G)^\top J+J(-G)=-\gamma J.
\]
Therefore,
\[
\frac{d}{dt}\bigl(M(t)^\top J M(t)\bigr)=-\gamma\,M(t)^\top J M(t).
\]
Using $M(0)=I$, we obtain
\[
M(t)^\top J M(t)=e^{-\gamma t}J,
\]
which proves \eqref{eq:flow_factor}.
\end{proof}

\subsection{Proof of Lemma~\ref{lem:scaling_csp}}\label{prf:proof_scaling_csp}
\begin{proof}
\hypertarget{prf:proof_scaling_csp}{}
Using the notation $\Scale{a}(q,p)=(q,ap)$, the map $\Scale{a}$ is linear and its Jacobian is constant:
\[
\mathrm{D}\Scale{a}=
\begin{pmatrix}
I & 0\\
0 & a I
\end{pmatrix}.
\]
Let $S:=\mathrm{D}\Scale{a}$. We compute $S^\top J S$ explicitly by
\[
S^\top J S=
\begin{pmatrix}
I & 0\\
0 & a I
\end{pmatrix}
\begin{pmatrix}
0&I\\
-I&0
\end{pmatrix}
\begin{pmatrix}
I & 0\\
0 & a I
\end{pmatrix}
=a\,J.
\]

\end{proof}

\subsection{Proof of Lemma~\ref{lem:csp_composition}}\label{prf:proof_csp_composition}
\begin{proof}
\hypertarget{prf:proof_csp_composition}{}
By the chain rule,
\[
\mathrm{D}(F\circ G)(y)=\mathrm{D}F(G(y))\,\mathrm{D}G(y).
\]
Hence
\begin{align*}
(\mathrm{D}(F\circ G)(y))^\top J\,\mathrm{D}(F\circ G)(y)
&=(\mathrm{D}G(y))^\top (\mathrm{D}F(G(y)))^\top J\,\mathrm{D}F(G(y))\,\mathrm{D}G(y)\\
&=(\mathrm{D}G(y))^\top (\alpha J)\,\mathrm{D}G(y)
 =\alpha\,(\mathrm{D}G(y))^\top J\,\mathrm{D}G(y)\\
&=\alpha\beta\,J.
\end{align*}
\end{proof}

\subsection{Proof of Proposition~\ref{prop:exact_csp}}\label{prf:proof_exact_csp}
\begin{proof}
\hypertarget{prf:proof_exact_csp}{}
By Lemma~\ref{lem:scaling_csp}, the scaling map $\Scale{a(h)}$ is conformally symplectic with factor $a(h)$, while by construction the one-step core map $\Psi_{\theta,h}$ is symplectic for each step size $h$. Applying Lemma~\ref{lem:csp_composition} twice to
\[
\widehat\Phi_{\Theta,h}=\Scale{a(h)} \circ \Psi_{\theta,h} \circ \Scale{a(h)}
\]
yields
\[
(\mathrm{D}\widehat\Phi_{\Theta,h}(z))^\top J\,\mathrm{D}\widehat\Phi_{\Theta,h}(z)=a(h)^2J.
\]
Using the scalar-rate parameterization
\[
a(h)=\exp\!\Bigl(-\frac{\hat\gamma h}{2}\Bigr),
\]
we obtain
\[
a(h)^2=\exp(-\hat\gamma h)=\lambda(h),
\]
which proves \eqref{eq:exact_csp}.
\end{proof}

\subsection{Proof of Theorem~\ref{thm:conjugacy} (Scaling conjugacy)}\label{prf:proof_conjugacy}
\begin{proof}
\hypertarget{prf:proof_conjugacy}{}
Let \(x\in \Scale{a}(U)\). Then there exists \(z\in U\) such that
\(x=\Scale{a}(z)\). Hence \(\Scale{a^{-1}}(x)=z\in U\), and therefore
\[
\Psi_h(x)
=
\Scale{a^{-1}}\circ \Phi_h\circ \Scale{a^{-1}}(x)
\]
is well defined on \(\Scale{a}(U)\).

By Lemma~\ref{lem:scaling_csp}, the scaling map \(\Scale{a^{-1}}\) is conformally symplectic with scalar factor \(a^{-1}\). Since
\(\Phi_h\in \mathrm{CSP}^r_{\lambda(h)}(U)\) and
\(\lambda(h)=a^2\), Lemma~\ref{lem:csp_composition} gives
\[
\bigl(\mathrm{D}\Psi_h(x)\bigr)^\top J\,\mathrm{D}\Psi_h(x)
=
a^{-1}\lambda(h)a^{-1}J
=
a^{-1}a^2a^{-1}J
=
J,
\qquad x\in \Scale{a}(U).
\]
Thus \(\Psi_h\in \mathrm{SP}^r(\Scale{a}(U))\).

Finally, composing the definition of \(\Psi_h\) with \(\Scale{a}\) on the left and on the right gives
\[
\Scale{a}\circ\Psi_h\circ\Scale{a}
=
\Scale{a}\circ\Scale{a^{-1}}\circ\Phi_h\circ\Scale{a^{-1}}\circ\Scale{a}
=
\Phi_h
\]
on \(U\). This proves the factorization.
\end{proof}

\subsection{Proof of Theorem~\ref{thm:density_transfer} (Pointwise-in-step density of CSympNet-ID)}\label{prf:proof_density_transfer}
\begin{proof}
\hypertarget{prf:proof_density_transfer}{}
Fix a compact set \(W\subset U\), a target map
\(\Phi_h\in \mathrm{CSP}^r_{\lambda(h)}(U)\), and \(\varepsilon>0\).
By Theorem~\ref{thm:conjugacy}, with \(a=\sqrt{\lambda(h)}\), the conjugated map
\[
\Psi_h:=\Scale{a^{-1}}\circ \Phi_h\circ \Scale{a^{-1}}
\]
is well defined on \(\Scale{a}(U)\) and belongs to \(\mathrm{SP}^r(\Scale{a}(U))\).
Since \(\Scale{a}(W)\subset \Scale{a}(U)\) is compact, the assumed \(r\)-uniform density of the fixed-\(h\) neural core family gives, for any \(\eta>0\), a core parameter \(\theta\) such that
\[
\left\|\Psi_{\theta,h}-\Psi_h\right\|_{C^r(\Scale{a}(W))}<\eta .
\]
Define
\[
F_{\theta,h}
:=
\Scale{a}\circ\Psi_{\theta,h}\circ\Scale{a}.
\]
By Lemma~\ref{lem:csp_composition}, \(F_{\theta,h}\in
\mathrm{CSP}^r_{\lambda(h)}(U)\), since the two scaling maps contribute the factors
\(a\) and \(a\), while \(\Psi_{\theta,h}\) is symplectic and \(a^2=\lambda(h)\). Moreover, because left and right composition with the fixed linear map \(\Scale{a}\) are bounded operations in the \(C^r\) norm on compact sets, there exists a constant
\(C=C(r,a,W)>0\) such that
\[
\left\|\widehat\Phi_{\theta,\gamma,h}-\Phi_h\right\|_{C^r(W)}
\le
C
\left\|\Psi_{\theta,h}-\Psi_h\right\|_{C^r(\Scale{a}(W))}.
\]
Choosing \(\eta=\varepsilon/C\) yields
\[
\left\|\widehat\Phi_{\theta,\gamma,h}-\Phi_h\right\|_{C^r(W)}<\varepsilon .
\]
Since \(W\subset U\), \(\Phi_h\in \mathrm{CSP}^r_{\lambda(h)}(U)\), and \(\varepsilon>0\) were arbitrary, the CSympNet-ID family is \(r\)-uniformly dense on compacta in
\(\mathrm{CSP}^r_{\lambda(h)}(U)\).
\end{proof}

\subsection{Additional experiment: sample efficiency}\label{app:sample_efficiency}

\begin{figure}[H]
\centering
\begin{subfigure}[t]{0.5\linewidth}
  \centering
  \includegraphics[width=\linewidth]{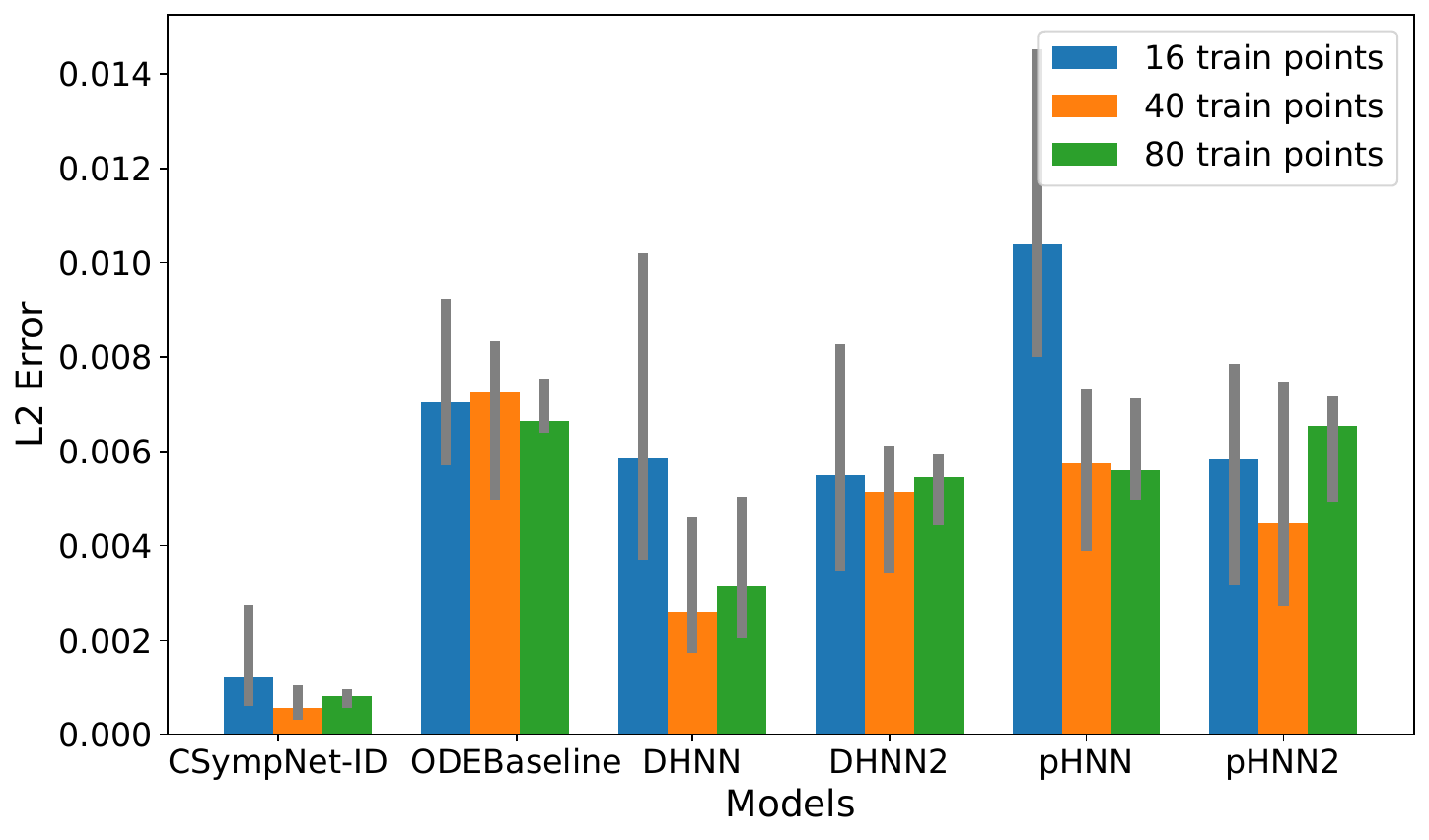}
  \caption{S1: damped harmonic oscillator.}
\end{subfigure}
\begin{subfigure}[t]{0.5\linewidth}
  \centering
  \includegraphics[width=\linewidth]{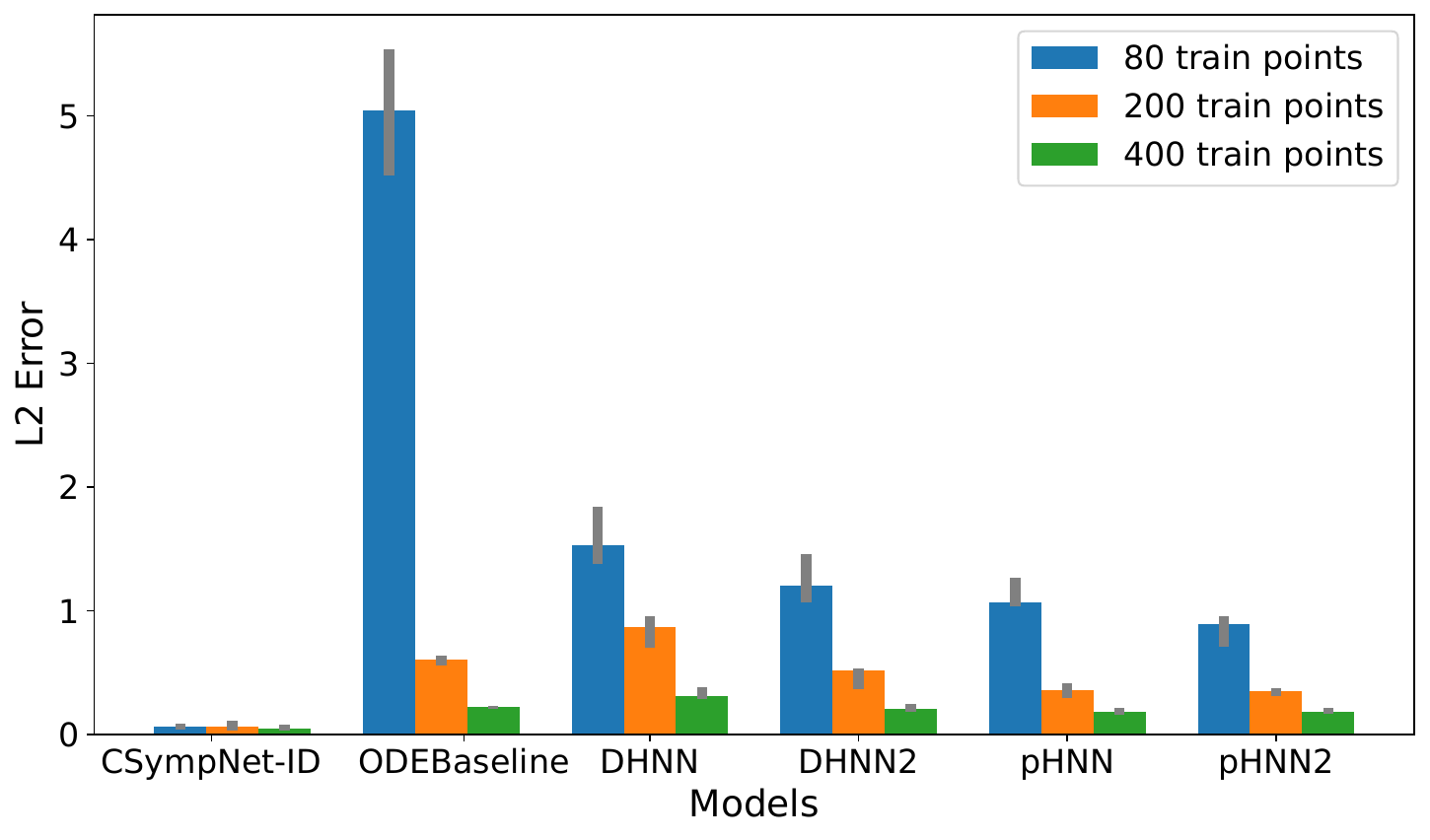}
  \caption{S2: damped spring-mass chain.}
\end{subfigure}
\begin{subfigure}[t]{0.5\linewidth}
  \centering
  \includegraphics[width=\linewidth]{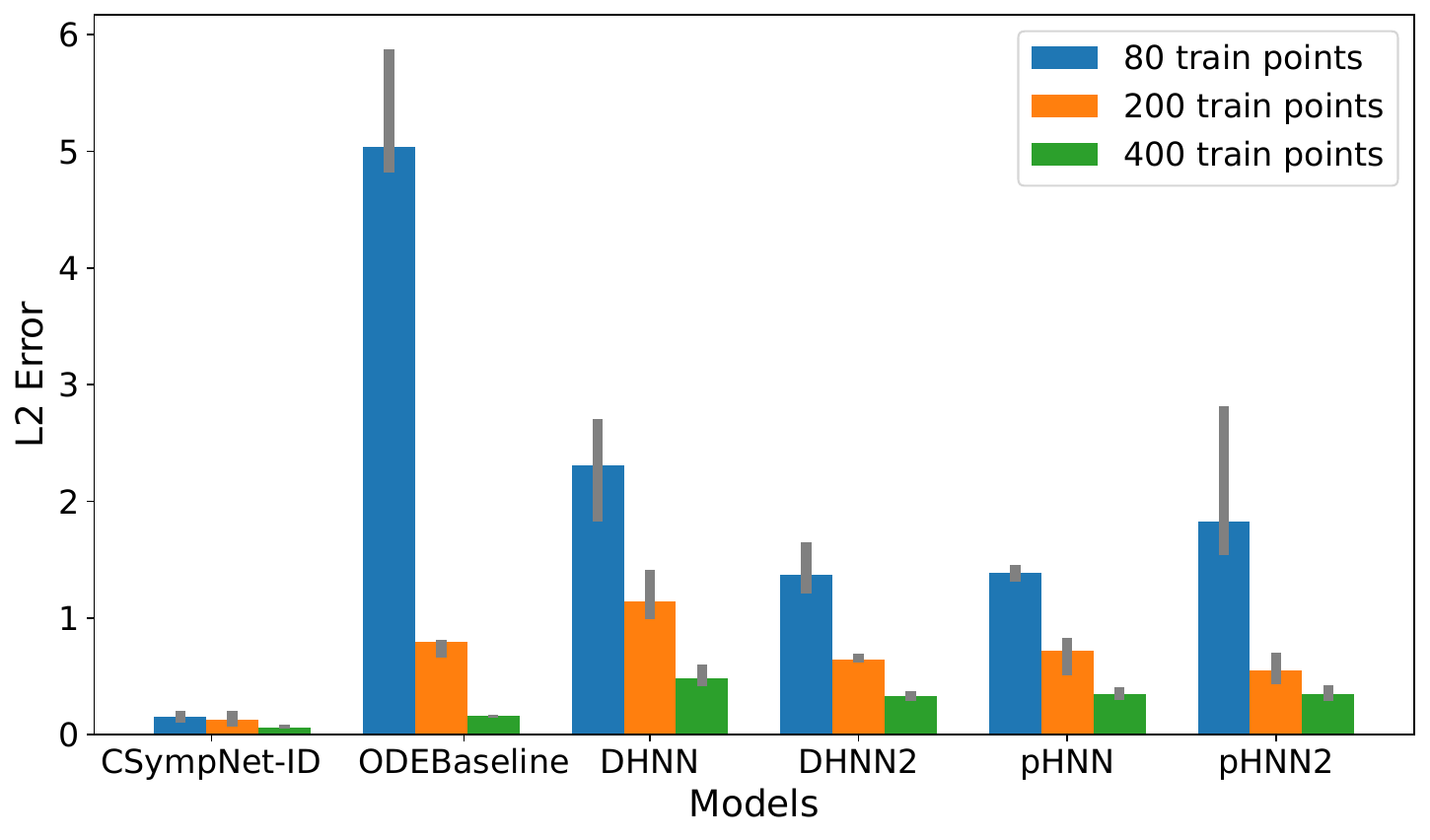}
  \caption{S3: damped nonlinear cubic oscillator.}
\end{subfigure}
\caption{Sample-efficiency comparison at $\sigma=0$. Bars show the median over $10$ random seeds, and error bars show the interquartile range for the training-point settings shown in the legends.}
\label{fig:pointnum_all}
\end{figure}
The auxiliary comparisons in Appendices~\ref{app:sample_efficiency}-\ref{app:heldout_step} focus on CSympNet-ID and the dissipative vector-field baselines.  We report sample-efficiency comparisons in Fig.~\ref{fig:pointnum_all}. SympNet and MapBaseline are not shown in these bar plots since their rollout errors are much larger on the same scale and would obscure the differences among the displayed methods; they are included in the full  comparisons in the main text.
CSympNet-ID remains robust as the number of training pairs is reduced: its rollout error is nearly insensitive to the training-set size and remains the lowest on all test problems. 
These results suggest that the proposed conformal-symplectic parameterization improves data efficiency in the reported tests.

\subsection{Additional experiment: held-out-step sweep}\label{app:heldout_step}

\begin{figure}[H]
\centering
\begin{subfigure}[t]{0.5\linewidth}
  \centering
  \includegraphics[width=\linewidth]{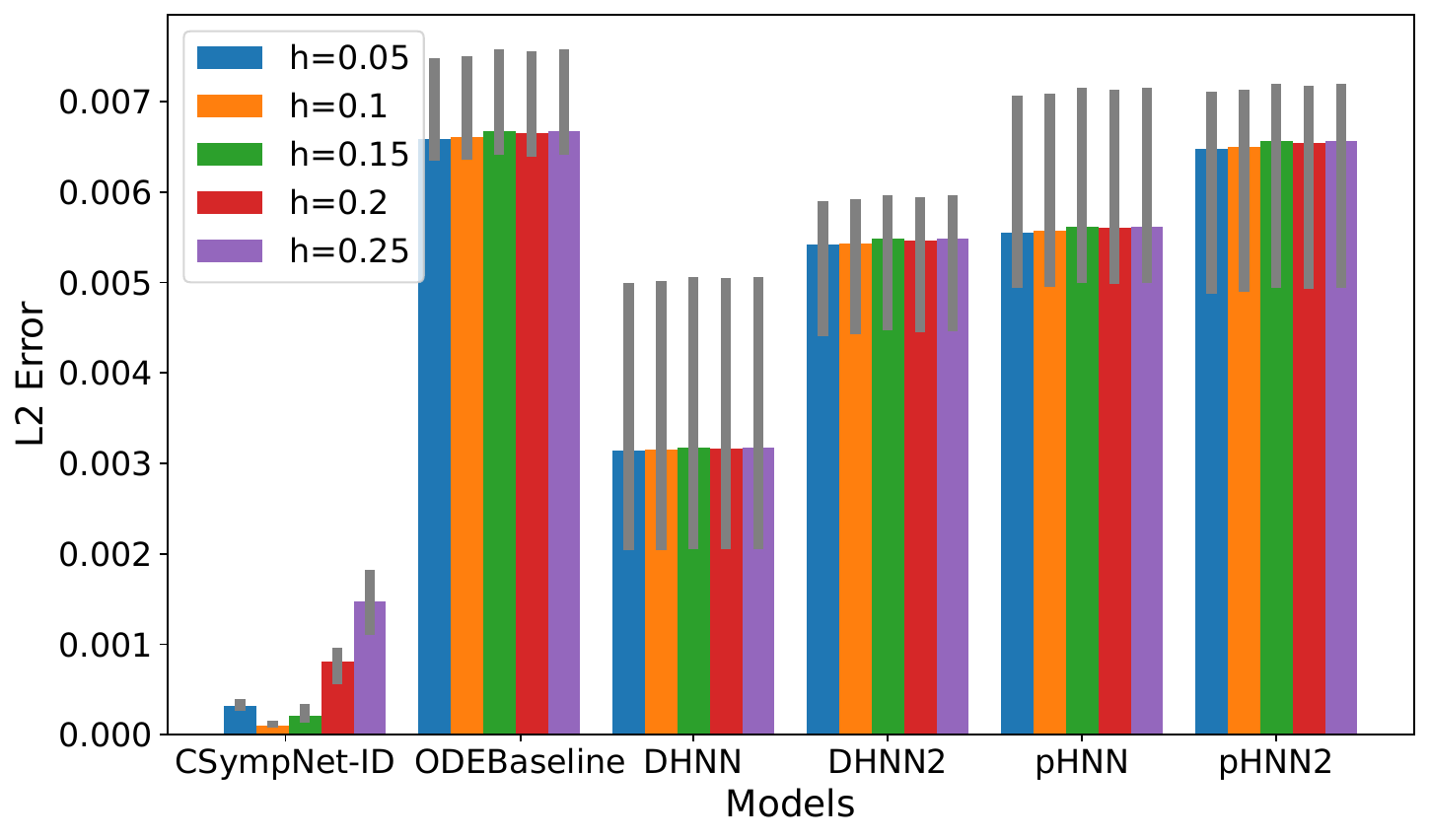}
  \caption{S1, $\sigma=0$.}
\end{subfigure}\hfill
\begin{subfigure}[t]{0.5\linewidth}
  \centering
  \includegraphics[width=\linewidth]{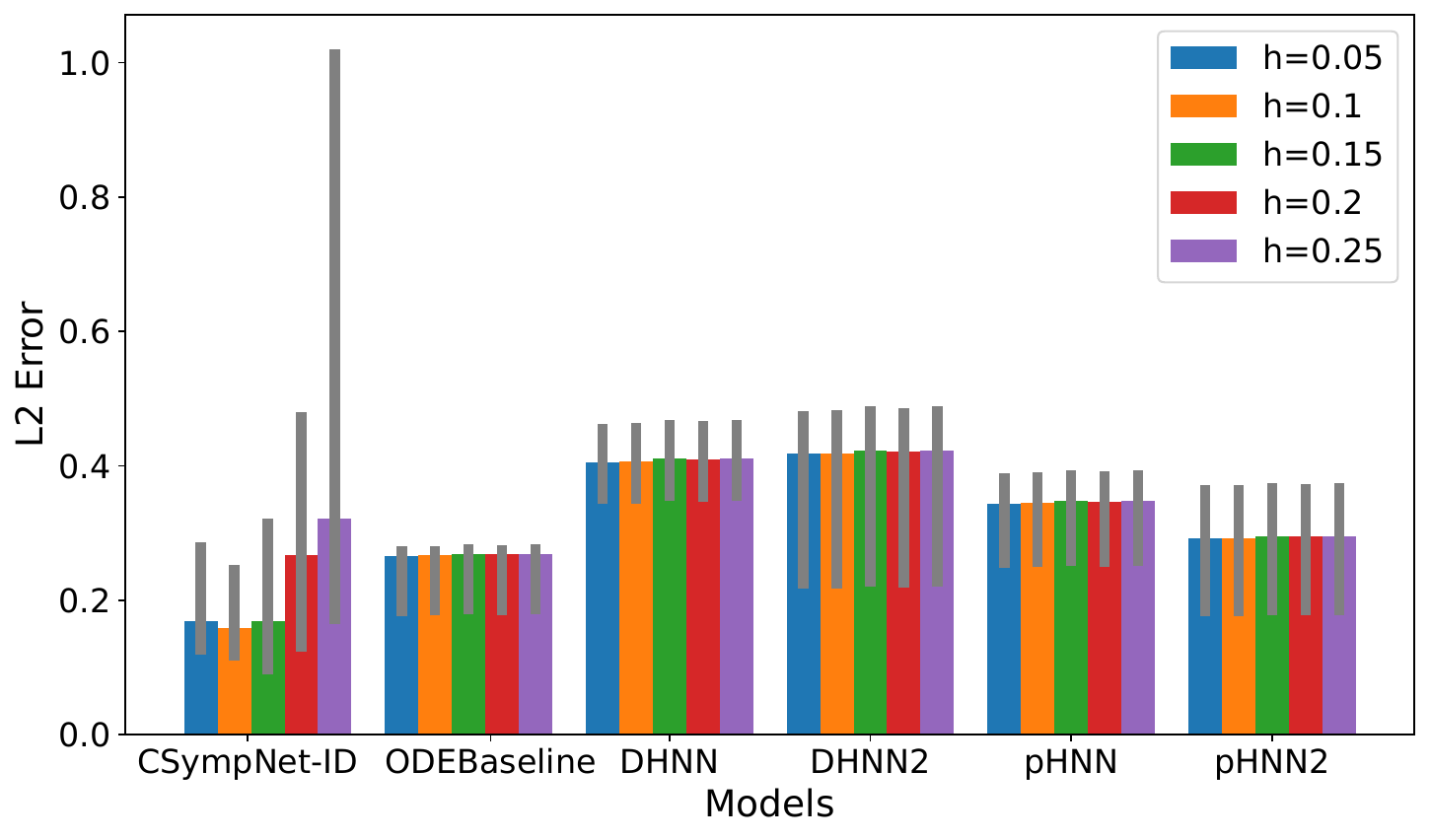}
  \caption{S1, $\sigma=0.01$.}
\end{subfigure}

\begin{subfigure}[t]{0.5\linewidth}
  \centering
  \includegraphics[width=\linewidth]{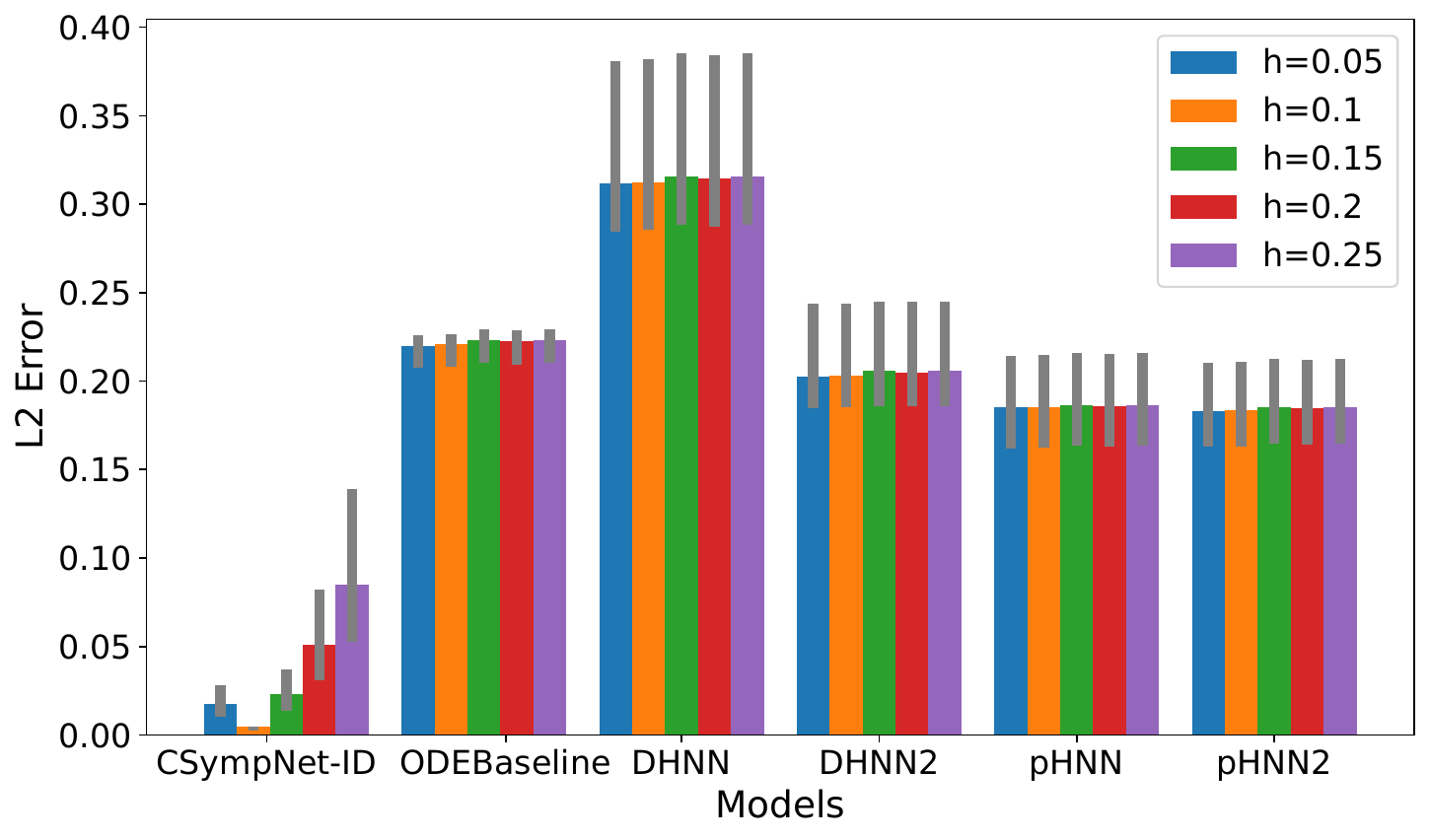}
  \caption{S2, $\sigma=0$.}
\end{subfigure}\hfill
\begin{subfigure}[t]{0.5\linewidth}
  \centering
  \includegraphics[width=\linewidth]{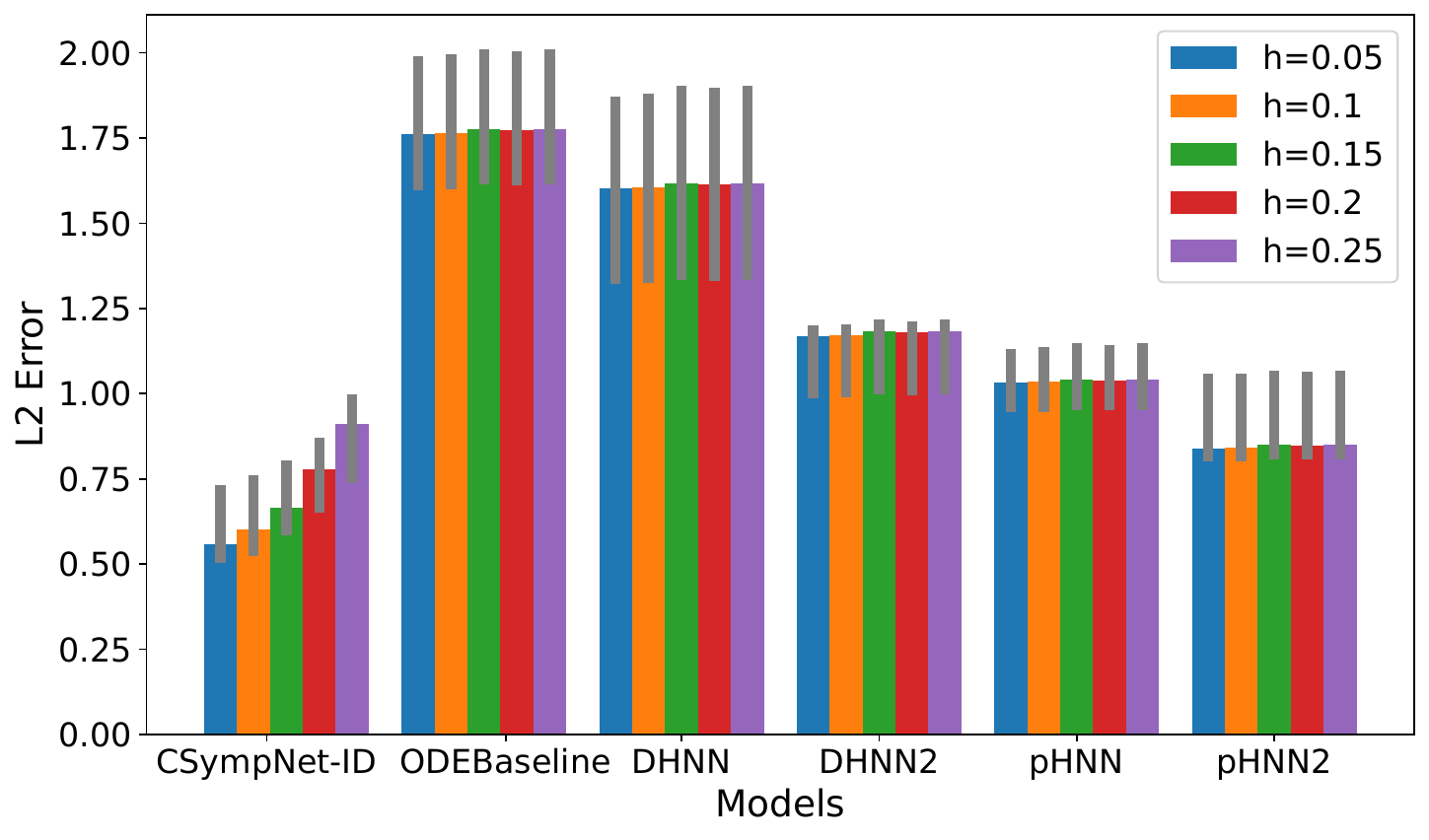}
  \caption{S2, $\sigma=0.01$.}
\end{subfigure}

\begin{subfigure}[t]{0.5\linewidth}
  \centering
  \includegraphics[width=\linewidth]{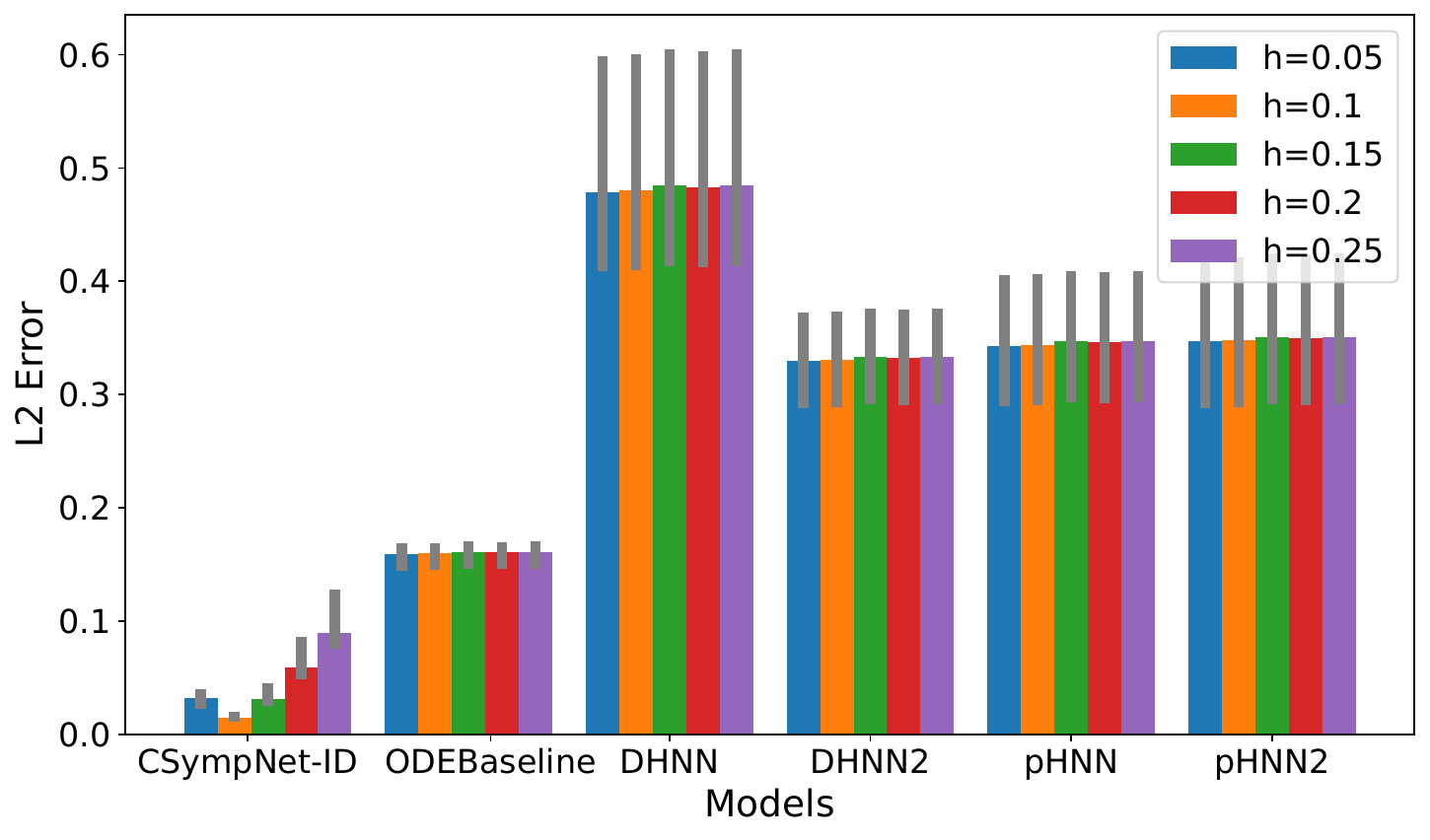}
  \caption{S3, $\sigma=0$.}
\end{subfigure}\hfill
\begin{subfigure}[t]{0.5\linewidth}
  \centering
  \includegraphics[width=\linewidth]{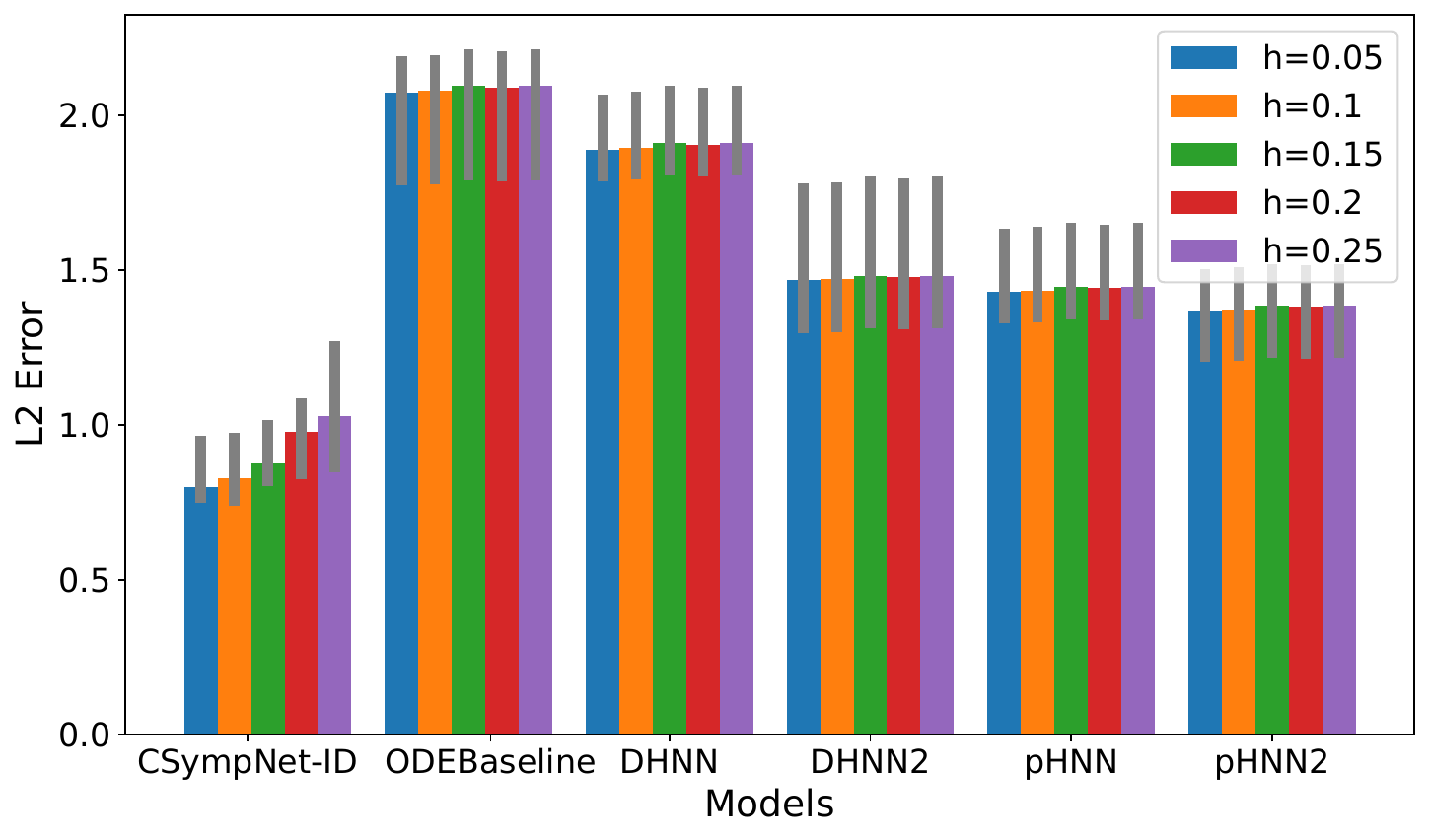}
  \caption{S3, $\sigma=0.01$.}
\end{subfigure}
\caption{Held-out-step sweep for long-horizon rollout error. Bars show the median over 10 random seeds, and error bars show the interquartile range.}
\label{fig:h_sweep_error}
\end{figure}

We report a held-out-step sweep in Fig.~\ref{fig:h_sweep_error}. Models are evaluated under $\sigma=0$ and $\sigma=0.01$, with $h_{\mathrm{test}}\in\{0.05,0.1,0.15,0.2,0.25\}$. On S1, CSympNet-ID remains competitive, but its error and variability increase for larger test steps under noisy observations. On  S2 and  S3, CSympNet-ID gives consistently lower median rollout errors than the compared baselines across the tested step sizes. These results suggest that the proposed step-dependent conformal-symplectic map provides stable cross-step rollout behavior, with the clearest gains appearing in the more challenging benchmarks.

% ===================== References 
\bibliographystyle{elsarticle-num}
\bibliography{paper}
\end{document}